\pdfoutput=1
\documentclass{article}





\usepackage[square,sort,comma,numbers]{natbib}
\usepackage[final]{arxiv}

\usepackage[utf8]{inputenc} 
\usepackage[T1]{fontenc}    
\usepackage{hyperref}       
\usepackage{url}            
\usepackage{booktabs}       
\usepackage{amsfonts}       
\usepackage{nicefrac}       
\usepackage{microtype}      
\usepackage{amsmath}
\usepackage{xcolor}

\usepackage{amssymb}
\usepackage{amsthm}
\usepackage{subfigure}
\usepackage{graphicx}
\usepackage{comment}

\usepackage[ruled]{algorithm2e}
\usepackage{amsmath, amsthm, thm-restate}
\usepackage{chngcntr}

\newtheorem{definition}{Definition}
\newtheorem{lemma}{Lemma}

\newtheorem{theorem}{Theorem}

\title{Demystifying Orthogonal Monte Carlo and Beyond}

%

\author{%
  Han Lin \thanks{equal contribution} \\
  Columbia University\\
  \And
  Haoxian Chen \footnotemark[1] \\
  Columbia University\\
  \And
  Tianyi Zhang \\
  Columbia University\\
  \And
  Clement Laroche \\
  Columbia University\\
  \And
  Krzysztof Choromanski \\
  Google Brain Robotics \& Columbia University \\
}

\begin{document}

\maketitle

\begin{abstract}
  Orthogonal Monte Carlo \cite{ort} (OMC) is a very effective sampling algorithm imposing structural geometric conditions (orthogonality) on samples for variance reduction. Due to its simplicity and superior performance as compared to its Quasi Monte Carlo counterparts, OMC is used in a wide spectrum of challenging machine learning applications ranging from scalable kernel methods \cite{unreas} to predictive recurrent neural networks \cite{psrnn}, generative models \cite{hron}  and reinforcement learning \cite{es_ort}.
  However theoretical understanding of the method remains very limited. In this paper we shed new light on the theoretical principles behind OMC, applying theory of negatively dependent random variables to obtain several new concentration results. 
  As a corollary, we manage to obtain first uniform convergence results for OMCs and consequently, substantially strengthen best known downstream guarantees for kernel ridge regresssion via OMCs. We also propose a novel extensions of the method leveraging theory of algebraic varieties over finite fields and particle algorithms, called \textit{Near-Orthogonal Monte Carlo} (NOMC). We show that NOMC is the first algorithm consistently outperforming OMC in applications ranging from kernel methods to approximating distances in probabilistic metric spaces.

\end{abstract}

\section{Introduction \& Related Work}
\label{sec:intro_rel_work}

Monte Carlo (MC) methods are widely applied in machine learning in such domains as: dimensionality reduction \cite{jlt,jlt2,jlt3}, scalable kernel methods with random feature maps \cite{rahimi}, generative modeling and variational autoencoders via sliced Wasserstein distances \cite{hron},  approximating Gaussian smoothings in Evolutionary Strategies (ES) algorithms for Reinforcement Learning (RL) \cite{es_ort}, predictive recurrent neural networks \cite{psrnn} and more.
The theory of MC is rich with various techniques improving the accuracy of base MC estimators such as: antithetic couplings and importance sampling \cite{mc_var_1}, variance reduction via carefully designed control variate terms \cite{mc_var_2, mc_var_3} and finally: the vast field of the so-called \textit{Quasi Monte Carlo} (QMC) methods \cite{qmc_1, qmc_2, qmc_3, qmc_4}. 

Relatively recently, yet another algorithm which can be combined with most of the aforementioned approaches, called \textit{Orthogonal Monte Carlo} (OMC) has been proposed \cite{ort}. OMC relies on ensembles of mutually orthogonal random samples for variance reduction and turns out to be very effective in virtually all applications of MC in machine learning involving isotropic distributions \cite{hron,psrnn,unreas, uni, geo}. Providing substantial accuracy improvements over MC baselines, conceptually simple, and superior to algorithms leveraging QMC techniques, it became one of the most frequently used techniques in a vast arsenal of MC tools.

OMCs are also much simpler than the class of MC methods based on determinantal point processes (DPPs) \cite{dpp_main}. DPPs provide elegant mechanisms for sampling diverse ensembles, where diversity is encoded by a kernel.
Some DPP-MCs \cite{guillaume, choro_dpp, bardenet2016monte} provide stronger theoretical guarantees than base MCs, yet those are impractical to use in higher dimensions due to their prohibitive time complexity, especially when samples need to be frequently constructed such as in RL (\cite{es_ort}).

Despite its effectiveness and impact across the field, theoretical principles behind the OMC method remain only partially understood, with theoretical guarantees heavily customized to specific applications and hard to generalize to other settings \cite{geo, uni, choro_sind}.

In this paper we shed new light on the effectiveness of OMCs by applying theory of \textit{negatively dependent} random variables that is a theoretical backbone of DPPs. Consequently, we present first comprehensive theoretical view on OMCs. Among our new results are first exponentially small probability bounds for errors of  OMCs applied to objectives involving \textbf{general nonlinear mappings}. Previously such results were known only for the cosine mapping in the setting of Gaussian kernel approximation via random features \cite{psrnn} and for random linear projections for dimensionality reduction. 
Understanding the effectiveness of OMCs in the general nonlinear setting was considered the Holy Grail of the research on structured MC methods, with elusive general theory. This striking discrepancy between practice where OMCs are used on a regular basis in general nonlinear settings and very limited developed theory is one of the main motivations of this work.

Our techniques enable us to settle several open conjectures for OMCs. Those involve not only aforementioned results for the general nonlinear case, but strong concentration results for arbitrary RBF kernels with no additional assumptions regarding corresponding spectral densities, in particular first such results for all Mat\'{e}rn kernels. 
We show that our concentration results directly imply uniform convergence of OMCs (which was an open question) and that these lead to substantial strengthening of the best known results for kernel ridge regression via OMCs from \cite{psrnn}. The strengthenings are twofold: we extend the scope to all RBF kernels as opposed to just \textit{smooth RBFs} \cite{psrnn} and we significantly improve accuracy guarantees.

One of the weaknesses of OMCs is that orthogonal ensembles can be defined only if the number of samples $s$ satisfies $s \leq d$, where $d$ stands for data dimensionality. 
In such a setting a relaxed version of the method is applied, where
one orthogonal block is replaced by multiple independent orthogonal blocks \cite{ort}.
Even though orthogonal entanglement of samples across different blocks is now broken, such block-orthogonal OMC methods (or B-OMCs) were still the most accurate known MC algorithms for isotropic distributions when $s \gg d$.

We propose an extension of OMCs relying on the ensembles of random near-orthogonal vectors preserving entangelements across all the samples, called by us \textit{Near-Orthogonal Monte Carlo} (NOMC), that to the best of our knowledge, is the first algorithm beating B-OMCs. We demonstrate it in different settings such as: kernel approximation methods and approximating \textit{sliced Wasserstein distances} (used on a regular basis in generative modeling). NOMCs are based on two new paradigms for constructing structured MC samples: high-dimensional optimization with particle methods and the theory of algebraic varieties over finite fields.

We highlight main contributions below.
Conclusions and broader impact analysis is given in Sec. \ref{sec:broader_impact}.
\vspace{-2mm}

\begin{itemize}
    \item By leveraging the theory of negatively dependent random variables, we provide first exponentially small bounds on error probabilities for OMCs used to approximate objectives involving \textbf{general nonlinear mappings} [Sec. \ref{sec:theory}: Theorem \ref{first_thm}, Theorem \ref{Thm_StrongConcentration_main}]. 
    \item We show how our general theory can be used to obtain simpler proofs of several known results and new results not known before [Sec. \ref{sec:tech_intro}, Sec. \ref{sec:applications}], e.g. first Chernoff-like concentration inequalities regarding certain classes of Pointwise Nonlinear Gaussian (PNG) kernels and all RBF kernels (previously such results were known only for RBF kernels with corresponding isotropic distributions of no heavy tails \cite{geom, psrnn}). 
    \item Consequently, we provide first uniform convergence results for OMCs and as a corollary, apply them to obtain new SOTA downstream guarantees for kernel ridge regression with OMCs [Sec. \ref{sec:uniform_convergence}], improving both: accuracy and scope of applicability.
    \item We propose two new paradigms for constructing structured samples for MC methods when $s \gg d$, leveraging number theory techniques and particle methods for high-dimensional optimization. In particular, we apply a celebrated Weil Theorem \cite{weil} regarding generating functions derived from counting the number of points on algebraic varieties over finite fields.
    \item We empirically demonstrate the effectiveness of NOMCs [Sec. \ref{sec:experiments}].
\end{itemize}
\vspace{-2mm}

\vspace{-3mm}
\section{Orthogonal Monte Carlo}
\label{sec:tech_intro}
\vspace{-2mm}
Consider a function $f_{\mathcal{Z}}: \mathbb{R}^{d} \rightarrow \mathbb{R}^{k}$, parameterized by an ordered subset $\mathcal{Z} \subseteq_{\mathrm{ord}} \mathcal{P}(\mathbb{R}^{d})$ and let:
\begin{equation}
\label{main_eq}
F_{f,\mathcal{D}}(\mathcal{Z}) \overset{\mathrm{def}}{=} \mathbb{E}_{\omega \sim \mathcal{D}} [f_{\mathcal{Z}}(\omega)],   
\end{equation}
where $\mathcal{D}$ is an isotropic probabilistic distribution on $\mathbb{R}^{d}$. In this work we analyze MC-based approximation of $F$. Examples of important machine learning instantiations of $F$ are given below.
\vspace{-6.5mm}
\paragraph{Kernel Functions \& Random Features:} Every shift-invariant kernel $K:\mathbb{R}^{d} \times \mathbb{R}^{d} \rightarrow \mathbb{R}$ can be written as $K(\mathbf{x},\mathbf{y})=g(\mathbf{x}-\mathbf{y})
\overset{\mathrm{def}}{=}\mathbb{E}_{\omega \sim \mathcal{D}}[\cos(\omega^{\top}(\mathbf{x}-\mathbf{y}))]$
for some probabilistic distribution $\mathcal{D}$ \cite{rahimi}. Furthermore, if $K$ is a \textit{radial basis function} ($\mathrm{RBF}$) kernel (e.g. Gaussian or Mat\'{e}rn), i.e. $K(\mathbf{x},\mathbf{y}) = r(\|\mathbf{x}-\mathbf{y}\|_{2})$ for some $r:\mathbb{R}_{\geq 0} \rightarrow \mathbb{R}$, then $\mathcal{D}$ is isotropic. Here $\mathcal{Z} = (\mathbf{z})$, where $\mathbf{z} = \mathbf{x} - \mathbf{y}$, and $f_{(\mathbf{z})}(\omega)=\cos(\omega^{\top}\mathbf{z})$.
For \textit{pointwise nonlinear Gaussian} [$\mathrm{PNG}$] kernels \cite{unreas} (e.g. angular or arc-cosine), given as $K_{h}(\mathbf{x},\mathbf{y})=\mathbb{E}_{\omega \sim \mathcal{N}(0,\mathbf{I}_{d})}[h(\omega^{\top}\mathbf{x})h(\omega^{\top}\mathbf{y})]$, where $h:\mathbb{R} \rightarrow \mathbb{R}$, the corresponding distribution $\mathcal{D}$ is multivariate Gaussian and $f$ is given as $f_{(\mathbf{x},\mathbf{y})}=h(\omega^{\top}\mathbf{x})h(\omega^{\top}\mathbf{y})$.
\vspace{-3mm}
\paragraph{Dimensionality Reduction [JLT]:}
Johnson-Lindenstrauss dimensionality reduction techniques (JLT) \cite{jlt_base, jlt, matousek} rely on embeddings of high-dimensional feature vectors via random projections given by vectors $\omega \sim \mathcal{N}(0,\mathbf{I}_{d})$. Expected squared distances between such embeddings of input high-dimensional vectors $\mathbf{x},\mathbf{y} \in \mathbb{R}^{d}$ are given as: $\mathrm{dist}^{2}_{\mathrm{JLT}}(\mathbf{x},\mathbf{y})=\mathbb{E}_{\omega \sim \mathcal{N}(0,\mathbf{I}_{d})}[(\omega^{\top}(\mathbf{x}-\mathbf{y}))^{2}]$. Here $\mathcal{D}$ is multivariate Gaussian and $f_{(\mathbf{z})} = (\omega^{\top}\mathbf{z})^{2}$ for $\mathbf{z}=\mathbf{x}-\mathbf{y}$.
\vspace{-3mm}
\paragraph{Sliced Wasserstein Distances [SWD]:} Wasserstein Distances (WDs) are metrics in spaces of probabilistic distributions that have found several applications in deep generative models \cite{arjovsky,gulrajani}. For $p \geq 1$, the $p$-th Wasserstein distance between two distributions $\eta$ and $\mu$ over $\mathbb{R}^d$ is defined as:
$$\mathrm{WD}_p(\eta,\mu) = \left(\inf_{\gamma \in \Gamma(\eta,\mu)} \int_{\mathbb{R}^d \times \mathbb{R}^d} ||\mathbf{x}-\mathbf{y}||_{2}^{p} d\gamma(\mathbf{x},\mathbf{y}) \right)^{{\frac{1}{p}}},$$
where $\Gamma(\eta,\mu)$ is the set of joint distributions over $\mathbb{R}^d \times \mathbb{R}^d$ for which the marginal of the first/last $d$ coordinates is $\eta$/$\mu$. Since WD computations involve solving nontrivial optimal transport problem (OPT) \cite{opt} in the high-dimensional space, in practice its more efficient to compute proxies are used, among them the so-called \textit{Sliced Wasserstein Distance} (SWD) \cite{swd}. 
SWDs are obtained by constructing projections $\eta_{\mathbf{u}}$ and $\mu_{\mathbf{u}}$ of $\eta$ and $\mu$ into a random 1d-subspace encoded by $\mathbf{u} \sim \mathrm{Unif}(\mathcal{S}^{d-1})$ chosen uniformly at random from the unit sphere $\mathcal{S}^{d-1}$ in $\mathbb{R}^{d}$ (see: Sec. \ref{sec:experiments}). If $\eta$ and $\mu$ are given as point clouds, they can be rewritten as in Equation \ref{main_eq}, where $\mathcal{Z}$ encodes $\eta$ and $\mu$ via cloud points.
\vspace{-3mm}

\subsection{Structured ensembles for Monte Carlo approximation}

A naive way of estimating function $F_{f,\mathcal{D}}(\mathcal{Z})$ from Equation \ref{main_eq} is to generate $s$ independent samples : $\omega^{\mathrm{iid}}_{1},...,\omega^{\mathrm{iid}}_{s} \overset{\mathrm{iid}}{\sim} \mathcal{D}$, which leads to the base unbiased Monte Carlo (MC) estimator:
\vspace{-2mm}
\begin{equation}
\label{base_mc_eq}
\widehat{F}^{\mathrm{iid}}_{f,\mathcal{D}}(\mathcal{Z}) \overset{\mathrm{def}}{=} 
\frac{1}{s}\sum_{i=1}^{s}f_{\mathcal{Z}}(\omega_{i}^{\mathrm{iid}}).
\end{equation}
\vspace{-3mm}

Orthogonal Monte Carlo (OMC) method relies on the isotropicity of $\mathcal{D}$ and instead entangles different samples in such a way that they are $\textbf{exactly orthogonal}$, while their marginal distributions match those of $\omega_{i}^{\mathrm{iid}}$ (this can be easily done for instance via Gram-Schmidt orthogonalization followed by row-renormalization, see: \cite{ort}). Such an ensemble $\{\omega^{\mathrm{ort}}_{1},...,\omega^{\mathrm{ort}}_{s}\}$ is then used to
replace $\{\omega^{\mathrm{iid}}_{1},...,\omega^{\mathrm{iid}}_{s}\}$
in Equation \ref{base_mc_eq} to get OMC estimator $\widehat{F}^{\mathrm{ort}}_{f,\mathcal{D}}(\mathcal{Z})$.

Estimator $\widehat{F}^{\mathrm{ort}}_{f,\mathcal{D}}(\mathcal{Z})$ can be constructed only if $s \leq d$, where $d$ stands for samples' dimensionality. In most practical applications we have: $s > d$ and thus instead the so-called \textit{block orthogonal Monte Carlo} (B-OMC) procedure is used, where $s$ samples are partitioned into $d$-size blocks, samples within each block are chosen as above and different blocks are constructed independently \cite{ort}. In B-OMC, orthogonality is preserved locally within a block, but this entanglement is lost across the blocks.

In the next section we provide new general theoretical results for OMCs.

\vspace{-3mm}
\section{Orthogonal Monte Carlo and Negatively Dependent Ensembles}
\label{sec:theory}

For a rigorous analysis, we will consider an instantiation of the objective from Eq. \ref{main_eq} of the form:
\begin{equation}
\label{imp_eq_main}
F_{f,\mathcal{D}}(\mathbf{z}) = \mathbb{E}_{\omega \sim \mathcal{D}}[f(\omega^{\top}\mathbf{z})]    
\end{equation}
for $\mathbf{z} \in \mathbb{R}^{d}$ and some function $f:\mathbb{R} \rightarrow \mathbb{R}$.
We consider the following classes of functions $f(u)$:
\begin{enumerate}
    \item [\textbf{F1.}] monotone increasing or decreasing in $|u|$,
    \item [\textbf{F2.}] decomposable as $f=f^{+}+f^{-}$, where $f^{+}$ is monotone increasing and $f^{-}$ is monotone decreasing in $|u|$,
    \item [\textbf{F3.}] entire (i.e. expressible as a Taylor series with an infinite radius of convergence, e.g. polynomials).
\end{enumerate}

\textbf{Remark:} As we will see later, for the class \textbf{F3} the role of $f^{+}$ and $f^{-}$ in the analysis is taken by functions: $\mathrm{even}[f]^{+}$ and $\mathrm{even}[f]^{-}$, where $\mathrm{even}[f]$ stands for function obtained from $f$ by taking terms of the Taylor series expansion corresponding to even powers. 

Such objectives $F_{f,\mathcal{D}}$ are general enough to cover: dimensionality reduction setting, all RBF kernels, certain classes of PNG kernels and several statistics regarding neural network with random weights (see: Sec. \ref{sec:applications}) that we mentioned before. See also Table 1, where we give an overview of specific examples of functions covered by us, and Sec. \ref{sec:applications} for much more detailed analysis of applications.

For a random variable $X$ we define moment generating function $M_{X}$ as:
$M_{X}(\theta)=\mathbb{E}[e^{\theta X}]$. Furthermore, we define \textit{Legendre symbol} as: 
$\mathcal{L}_{X}(a) = \sup_{\theta>0} \log(\frac{e^{\theta a}}{M_{X}(\theta)})$ if $a > \mathbb{E}[X]$ and $\mathcal{L}_{X}(a) = \sup_{\theta<0} \log(\frac{e^{\theta a}}{M_{X}(\theta)})$ if $a < \mathbb{E}[X]$. It is a standard fact from probability theory that $\mathcal{L}_{X}(a) > 0$ for every $a \neq \mathbb{E}[X]$.

We prove first exponentially small bounds for failure probabilities of OMCs applied to functions from all three classes and in addition show that for the class \textbf{F1} obtained concentration bounds are better than for the base MC estimator.

Our results can be straightforwardly extended to classes of functions expressible as limits of functions from the above \textbf{F1}-\textbf{F3}, but for the clarity of the exposition we skip this analysis.
To the best of our knowledge, we are the first to provide theory that addresses also \textbf{discontinuous} functions.

\small
\vspace{-1mm}
\begin{center}
 \begin{tabular}{||c | c| c | c | c||} 
 \hline
 \null & $\textrm{JLT}$ & $\textrm{PNG: $h(x)=e^{cx}$}$ & $\textrm{Gaussian}$ &  $\nu$-\textrm{\textrm{Mat\'{e}rn}} \\ [0.5ex] 
 \hline\hline
 $\mathrm{class}$ & $\textrm{\textbf{F1},\textbf{F3}}$ &
 $\textbf{F3}$ & $\textrm{\textbf{F2},\textbf{F3}}$ &  $\textrm{\textbf{F2},\textbf{F3}}$ \\ 
 \hline
  $f^{+}(u)/\mathrm{even}[f]^{+}(u)$ & $x^{2}$ & $\sum_{k=0}^{\infty} \frac{(cu)^{2k}}{(2k)!}$ & $\sum_{k=0}^{\infty} \frac{u^{4k}}{(4k)!}$ &  $\sum_{k=0}^{\infty} \frac{u^{4k}}{(4k)!}$ \\ 
 \hline  
  $f^{-}(u)/\mathrm{even}[f]^{-}(u)$ & $\textrm{N/A}$ & $\textrm{N/A}$ & $-\sum_{k=0}^{\infty} \frac{u^{4k+2}}{(4k+2)!}$ &  $-\sum_{k=0}^{\infty} \frac{u^{4k+2}}{(4k+2)!}$ \\ 
 \hline  
 \textrm{SOTA results for OMC} & \textrm{ortho-JLTs} \cite{jlt_base} & \textrm{\textbf{ours}} & \textrm{\cite{psrnn}} & $\textrm{\textbf{ours}: any $\nu$}$ \\ 
 \hline 
\end{tabular}
\end{center}
\vspace{-1mm}
\small Table 1: Examples of particular instantiations of function classes \textbf{F1-F3} covered by our theoretical results.
\normalsize

The key tool we apply to obtain our general theoretical results is the notion of \textit{negative dependence} [ND] \cite{neg_dep_2, neg_dep_1}  that is also used in the theory of Determinantal Point Processes (DPPs) \cite{dpp_main}:

\begin{definition}[Negative Dependence (ND)] \label{Def_ND}
Random variables $X_1,…,X_n$ are said to be negatively dependent if both of the following two inequalities hold for any $x_1,…,x_n \in \mathbb{R}$

$$ \mathbb{P}( \bigcap_i X_i \geq x_i ) \leq \prod_i \mathbb{P}(X_i \geq x_i),\textrm{and }\mathbb{P}( \bigcap_i X_i \leq x_i ) \leq \prod_i \mathbb{P}(X_i \leq x_i).$$
\end{definition}

We show that certain classes of random variables built on orthogonal ensembles satisfy ND property:

\begin{restatable}[ND for OMC-samples and monotone functions]{lemma}{FirstLemma}
\label{nd-lemma}
For an isotropic distribution $\mathcal{D}$ on $\mathbb{R}^{d}$ and 
orthogonal ensemble: $\omega^{\mathrm{ort}}_{1},...,\omega^{\mathrm{ort}}_{d}$ with $\omega^{\mathrm{ort}}_{i} \sim \mathcal{D}$,
random variables:$X_{1},...,X_{d}$ defined as: $X_{i} = |\omega^{\mathrm{ort}}_{i}\mathbf{z}|$ are negatively dependent for any fixed $\mathbf{z} \in \mathbb{R}^{d}$.
\end{restatable}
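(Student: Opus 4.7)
My plan is to exploit the fact that any isotropic sample $\omega \sim \mathcal{D}$ factors as $\omega = R U$ with $R = \|\omega\|$ and $U = \omega/\|\omega\|$ uniform on $\mathcal{S}^{d-1}$ and independent of $R$. For the OMC ensemble constructed by Gram-Schmidt followed by row renormalization, this factorization survives jointly: one can write $\omega_i^{\mathrm{ort}} = R_i \mathbf{e}_i^{\mathrm{ort}}$ where $R_1,\dots,R_d$ are i.i.d.\ copies of $\|\omega_1^{\mathrm{iid}}\|$, the frame $(\mathbf{e}_1^{\mathrm{ort}},\dots,\mathbf{e}_d^{\mathrm{ort}})$ is Haar-distributed on the orthogonal group $O(d)$, and the radii are independent of the frame. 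Therefore $X_i = R_i \, |\mathbf{e}_i^{\mathrm{ort}}\cdot \mathbf{z}|$, and the analysis reduces to studying the angular part $Y_i \overset{\mathrm{def}}{=} |\mathbf{e}_i^{\mathrm{ort}}\cdot \mathbf{z}|$.

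Next I would handle the angular variables. By Haar-invariance, $(\mathbf{e}_1^{\mathrm{ort}}\cdot \mathbf{z},\dots,\mathbf{e}_d^{\mathrm{ort}}\cdot \mathbf{z})$ is distributed as $\|\mathbf{z}\|\cdot \mathbf{u}$ where $\mathbf{u}$ is uniform on $\mathcal{S}^{d-1}$. Hence the squared angular part $(Y_1^2,\dots,Y_d^2) = \|\mathbf{z}\|^2 (u_1^2,\dots,u_d^2)$ has the Dirichlet$(\tfrac12,\dots,\tfrac12)$ distribution scaled by $\|\mathbf{z}\|^2$, in particular it lives on the simplex $\sum_i Y_i^2 = \|\mathbf{z}\|^2$. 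Dirichlet distributions are known to be negatively associated, and in particular ND, in the sense of Definition~\ref{Def_ND} (this is a classical fact; it can be derived by checking that the joint density is a permutation-symmetric log-concave function on the simplex, and then invoking the standard ``sum-constrained implies ND'' principle). Since ND is preserved under componentwise monotone increasing transformations (because $\{g_i(Y_i^2)\ge x_i\} = \{Y_i^2 \ge g_i^{-1}(x_i)\}$ for $g_i(t)=\sqrt{t}$), it follows that $(Y_1,\dots,Y_d)$ is itself ND.

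Finally I would lift ND from $(Y_i)$ to $(X_i) = (R_i Y_i)$ using the independent nonnegative multipliers $R_i$. Conditioning on $(R_1,\dots,R_d)$ and applying ND of $(Y_i)$ pointwise yields
\begin{equation*}
\mathbb{P}\!\left(\bigcap_i X_i \ge x_i \,\middle|\, R_1,\dots,R_d\right) \;\le\; \prod_i \mathbb{P}(Y_i \ge x_i/R_i \mid R_i),
\end{equation*}
and taking expectations, mutual independence of the $R_i$'s factors the product to give $\prod_i \mathbb{P}(X_i \ge x_i)$; the lower-tail inequality is analogous. This establishes ND for $(X_1,\dots,X_d)$.

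The main obstacle I foresee is the Dirichlet step: rigorously verifying the ND property of Dirichlet$(\tfrac12,\dots,\tfrac12)$ requires a nontrivial but standard argument (either invoking the known negative association of normalized sums of independent gamma variables via the factorization $u_i^2 = G_i/\sum_j G_j$ with $G_j \sim \mathrm{Gamma}(1/2)$ i.i.d., or proving it directly from log-concavity on the simplex). The independent-multiplier step is clean but also worth stating carefully, since in general ND is not closed under arbitrary couplings and one must use that the $R_i$ are mutually independent and independent of the frame.
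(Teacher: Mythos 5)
Your overall architecture --- factor each sample as an independent radius times a unit vector from a Haar frame, reduce to the angular variables $Y_i = |\mathbf{e}_i^{\mathrm{ort}}\cdot\mathbf{z}|$, and lift ND back through the independent radial multipliers by conditioning --- is the same as the paper's. Your handling of the radial step is in fact more careful than the paper's (the paper essentially asserts that the random thresholds $\tilde{x}_i/l_i$ ``do not matter''; your conditioning-plus-mutual-independence argument is the right way to make that rigorous).

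The gap is in the middle step. You reduce everything to the claim that the Dirichlet$(\tfrac12,\dots,\tfrac12)$ vector $(u_1^2,\dots,u_d^2)$ is ND, and your stated justification is that the joint density is a permutation-symmetric \emph{log-concave} function on the simplex. That is false for these parameters: the density is proportional to $\prod_i x_i^{-1/2}$, which is log-convex and unbounded at the boundary of the simplex. The alternative route you mention --- negative association of $(G_i/\sum_j G_j)$ for independent gammas --- hits the same wall: the standard theorem (independent variables with PF$_2$/log-concave densities are NA conditionally on their sum, as in Joag-Dev--Proschan) requires shape parameter at least $1$, and $\mathrm{Gamma}(1/2)$ is not log-concave. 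So the ``classical fact'' you invoke is exactly the crux of the lemma, and the justifications you sketch do not cover the one parameter value that actually arises. The paper closes precisely this step with a direct elementary argument: write $u_i = g_i/\|g\|_2$ for a Gaussian vector $g$, expand $\mathbb{P}[\bigcap_i\{g_i^2/\|g\|_2^2\le x_i^2\}]$ as a chain of conditional probabilities, and observe that each conditioning event translates into a lower-bound constraint on $g_i^2$ while the target event is an upper-bound constraint, so each conditional probability is at most the unconditional one. To complete your proof you would need either that direct argument or a reference establishing ND (or NA) of the Dirichlet for parameters strictly below $1$.
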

 
Lemma 1 itself does not guarantee strong convergence for orthogonal ensembles however is one of the key technical ingredients that helps us to achieve this goal. The following is true:

\begin{restatable}{lemma}{SecondLemma}
\label{Lem_NA_ExpectationProductInequality}
Assume that $f$ is a function from the class $\textbf{F1}$.
Let $X_i = f(\omega_{i}^{\mathrm{ort}}\mathbf{z})$ for $i=1,...,n$, and let $\lambda$ be a non-positive (or non-negative) real number. Then the following holds:
$$ \mathbb{E}[ \mathrm{e}^{ \lambda \sum_{i=1}^{m}X_{i} } ] \leq \prod_{i=1}^{m} \mathbb{E}[\mathrm{e}^{\lambda X_{i} } ].$$
\end{restatable}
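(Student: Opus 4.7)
The plan is to reduce the exponential product inequality to a monotone-function product inequality for the negatively dependent variables $U_i := |\omega_i^{\mathrm{ort}\top}\mathbf{z}|$ provided by Lemma \ref{nd-lemma}. Since $f\in\textbf{F1}$ depends on its argument only through $|u|$, we may write $X_i = g(U_i)$ for some function $g:[0,\infty)\to\mathbb{R}$ that is monotone in the same direction as $f$. Consequently $e^{\lambda X_i} = h(U_i)$ with $h(u) := e^{\lambda g(u)}$, and $h$ is a strictly positive function whose monotonicity direction is determined jointly by $\mathrm{sign}(\lambda)$ and the direction of $g$. Since every factor uses the same $h$, all factors share the same monotonicity direction, which is exactly the regime in which ND gives a product-expectation bound.

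The central intermediate step is the following standard consequence of negative dependence, which I would prove en route: if $U_1,\ldots,U_m$ are ND and $h_1,\ldots,h_m$ are nonnegative measurable functions that are \emph{all} non-decreasing or \emph{all} non-increasing, then
$$\mathbb{E}\Big[\prod_{i=1}^{m} h_i(U_i)\Big] \le \prod_{i=1}^{m}\mathbb{E}[h_i(U_i)].$$
The argument is the layer-cake representation $h_i(u)=\int_{0}^{\infty}\mathbf{1}[h_i(u)\ge t]\,dt$: expanding the product and swapping product and integral by Fubini gives
$$\mathbb{E}\Big[\prod_{i} h_i(U_i)\Big]=\int_{[0,\infty)^{m}}\mathbb{P}\Big(\bigcap_{i}\{h_i(U_i)\ge t_i\}\Big)\,dt.$$
For non-decreasing $h_i$ each set $\{h_i(U_i)\ge t_i\}$ is an upper level set $\{U_i \ge s_i(t_i)\}$, so Definition \ref{Def_ND} (upper-tail inequality) bounds the integrand by $\prod_i \mathbb{P}(U_i\ge s_i(t_i))$; a second application of Fubini re-factorizes the integral into $\prod_i \mathbb{E}[h_i(U_i)]$. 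The non-increasing case is symmetric and uses the second (lower-tail) inequality of Definition \ref{Def_ND}.

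Applying this inequality with $h_i := h$ yields exactly the desired bound on $\mathbb{E}[e^{\lambda\sum_i X_i}]$. The main technical point, rather than a deep obstacle, is to correctly pair the sign of $\lambda$ with the direction of $g$ so that the factors are all monotone in the same direction and the appropriate branch of the ND definition applies; the case analysis is: $(\lambda\ge 0,\ g\uparrow)$ and $(\lambda\le 0,\ g\downarrow)$ give non-decreasing $h$, while $(\lambda\ge 0,\ g\downarrow)$ and $(\lambda\le 0,\ g\uparrow)$ give non-increasing $h$. Measurability is automatic since $f\in\textbf{F1}$ is monotone in $|u|$, and positivity of $h$ sidesteps sign issues in the layer-cake step, so the argument goes through even for discontinuous $f$.
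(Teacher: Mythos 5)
Your proposal is correct, and it follows the same overall reduction as the paper (compose the ND variables $U_i=|\omega_i^{\mathrm{ort}\top}\mathbf{z}|$ from Lemma \ref{nd-lemma} with a single monotone map $h=e^{\lambda g}$ and invoke a product-expectation inequality for same-direction monotone functions of ND variables), but it differs in how that key inequality is obtained. The paper treats it as a black box: it cites a known result stating that ND is preserved under coordinatewise monotone maps of a common direction and that ND variables satisfy $\mathbb{E}[X_1\cdots X_n]\le\prod_i\mathbb{E}[X_i]$, then applies it to $e^{\lambda X_i}$. You instead prove the needed inequality from Definition \ref{Def_ND} directly via the layer-cake representation and Tonelli, which makes the argument self-contained, handles nonnegativity and possible non-integrability for free (everything in sight is nonnegative, so both Fubini swaps are legitimate and the inequality is vacuous when the right-hand side is infinite), and works for discontinuous $f$. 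The only point to tighten is that the upper level set $\{u:h_i(u)\ge t_i\}$ of a non-decreasing function may be an open ray $(s_i,\infty)$ rather than a closed one, whereas Definition \ref{Def_ND} is stated with non-strict inequalities; this is repaired by writing $\{U_i>s_i\}=\bigcup_k\{U_i\ge s_i+1/k\}$ and passing to the limit in the ND bound (the degenerate cases of empty or full level sets are trivial). With that caveat your case analysis on $\mathrm{sign}(\lambda)$ versus the direction of $g$ is exactly right and the proof goes through.
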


Note that Lemma \ref{Lem_NA_ExpectationProductInequality} lead directly to the following corollary relating iid and orthogonal ensembles:

\begin{restatable}[exponentials of OMCs and MCs]{corollary}{FirstCorollary}
\label{main_cor}
Let $\mathbf{z} \in \mathbb{R}^{d}$
and assume that function $f:\mathbb{R} \rightarrow \mathbb{R}$ is from the class $\textbf{F1}$.
Take an isotropic distribution $\mathcal{D}$ on $\mathbb{R}^{d}$, an ensemble of independent samples $\omega^{\mathrm{iid}}_{1},...,\omega^{\mathrm{iid}}_{s}$ and an orthogonal ensemble $\omega^{\mathrm{ort}}_{1},...,\omega^{\mathrm{ort}}_{s}$ giving rise to base MC estimator $\widehat{F}^{\mathrm{iid}}_{f,\mathcal{D}}(\mathbf{z})$ of $\mathbb{E}_{\omega \sim \mathcal{D}}[f(\omega^{\top}\mathbf{z})]$ and to its orthogonal version $\widehat{F}^{\mathrm{ort}}_{f,\mathcal{D}}(\mathbf{z})$. Then the following is true for any $\lambda$:
\begin{equation}
\mathbb{E}[e^{\lambda \widehat{F}^{\mathrm{ort}}_{f,\mathcal{D}}(\mathbf{z})}] \leq \mathbb{E}[e^{\lambda \widehat{F}^{\mathrm{iid}}_{f,\mathcal{D}}(\mathbf{z})}].    
\end{equation}
\end{restatable}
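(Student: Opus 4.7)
The plan is to reduce the corollary to a direct invocation of Lemma~\ref{Lem_NA_ExpectationProductInequality} applied with the rescaled parameter $\lambda/s$, followed by a marginal-equivalence argument to pass from orthogonal to i.i.d.\ samples.

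First I would unfold the definitions: by construction,
\[
\widehat{F}^{\mathrm{ort}}_{f,\mathcal{D}}(\mathbf{z}) \;=\; \frac{1}{s}\sum_{i=1}^{s} f\bigl((\omega^{\mathrm{ort}}_{i})^{\top}\mathbf{z}\bigr),\qquad
\widehat{F}^{\mathrm{iid}}_{f,\mathcal{D}}(\mathbf{z}) \;=\; \frac{1}{s}\sum_{i=1}^{s} f\bigl((\omega^{\mathrm{iid}}_{i})^{\top}\mathbf{z}\bigr),
\]
so that with $X_i := f\bigl((\omega^{\mathrm{ort}}_{i})^{\top}\mathbf{z}\bigr)$ and $Y_i := f\bigl((\omega^{\mathrm{iid}}_{i})^{\top}\mathbf{z}\bigr)$ we have $e^{\lambda \widehat{F}^{\mathrm{ort}}_{f,\mathcal{D}}(\mathbf{z})}=e^{(\lambda/s)\sum_i X_i}$ and similarly for the i.i.d.\ estimator.

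Next I would apply Lemma~\ref{Lem_NA_ExpectationProductInequality} to the $X_i$'s with the rescaled scalar $\lambda' := \lambda/s$. Because the lemma's assumption covers both the non-positive and the non-negative case, it is valid for \emph{any} real $\lambda$, giving
\[
\mathbb{E}\bigl[e^{\lambda \widehat{F}^{\mathrm{ort}}_{f,\mathcal{D}}(\mathbf{z})}\bigr] \;=\; \mathbb{E}\Bigl[e^{(\lambda/s)\sum_{i=1}^{s} X_i}\Bigr] \;\leq\; \prod_{i=1}^{s}\mathbb{E}\bigl[e^{(\lambda/s)X_i}\bigr].
\]
At this point the key observation is the marginal equivalence: by construction of the orthogonal ensemble, $\omega^{\mathrm{ort}}_{i}\sim \mathcal{D}$ for each $i$, exactly as for $\omega^{\mathrm{iid}}_{i}$. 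Hence $X_i$ and $Y_i$ are equal in distribution, so $\mathbb{E}[e^{(\lambda/s)X_i}] = \mathbb{E}[e^{(\lambda/s)Y_i}]$. Finally, independence of the $Y_i$'s factorizes the product back into a single expectation, yielding
\[
\prod_{i=1}^{s}\mathbb{E}\bigl[e^{(\lambda/s)Y_i}\bigr] \;=\; \mathbb{E}\Bigl[e^{(\lambda/s)\sum_{i=1}^{s} Y_i}\Bigr] \;=\; \mathbb{E}\bigl[e^{\lambda \widehat{F}^{\mathrm{iid}}_{f,\mathcal{D}}(\mathbf{z})}\bigr],
\]
which chains with the previous inequality to give the claim.

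There is essentially no hard step once Lemma~\ref{Lem_NA_ExpectationProductInequality} is in hand; the only points deserving care are (i) the rescaling $\lambda\mapsto \lambda/s$ so that the exponent of the estimator matches the sum appearing in the lemma, and (ii) verifying that Lemma~\ref{Lem_NA_ExpectationProductInequality} applies uniformly over the sign of $\lambda$, which is precisely the content of the ``non-positive (or non-negative)'' clause in its statement. The real work sits upstream in Lemmas~\ref{nd-lemma} and~\ref{Lem_NA_ExpectationProductInequality}; the corollary is a clean packaging of those ingredients.
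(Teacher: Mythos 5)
Your proposal is correct and follows essentially the same route as the paper: apply Lemma~\ref{Lem_NA_ExpectationProductInequality} to the negatively dependent orthogonal samples (with the exponent rescaled by $1/s$, covering both signs of $\lambda$), then use the matching marginals and the independence of the i.i.d.\ samples to recombine the product into $\mathbb{E}[e^{\lambda \widehat{F}^{\mathrm{iid}}_{f,\mathcal{D}}(\mathbf{z})}]$. The only piece the paper adds that you omit is the case $s>d$, where the ensemble consists of $k$ independent orthogonal blocks: there one first factorizes over the independent blocks and applies the same inequality within each block, a routine extension.
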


Corollary \ref{main_cor} enables us to obtain stronger concentrations results for OMCs than for base MCs for the class $\mathbf{F1}$. By combining it with extended Markov's inequality, we obtain the following: 

\begin{restatable}[OMC-bounds surpassing MC-bounds for the $\textbf{F1}$-class]{theorem}{FirstTheorem}
\label{first_thm}
Denote by $\mathrm{MSE}$ a mean squared error of the estimator, by $s$ the number of MC samples used and let $X=f(\omega^{\top}\mathbf{z})$ for $\omega \sim \mathcal{D}$. 
Then under assumptions as in Corollary \ref{main_cor}, OMC leads to the unbiased estimator satisfying for $\epsilon>0$:
\begin{equation}
\mathbb{P}[|\widehat{F}^{\mathrm{ort}}_{f,\mathcal{D}}(\mathbf{z})-F_{f,\mathcal{D}}(\mathbf{z})| \geq \epsilon] \leq p(\epsilon),    
\end{equation}
where $p(\epsilon)$ defined as: $p(\epsilon) \overset{\mathrm{def}}{=} \exp(-s\mathcal{L}_{X}(F_{f,\mathcal{D}}(\mathbf{z})+\epsilon)+\exp(-s\mathcal{L}_{X}(F_{f,\mathcal{D}}(\mathbf{z})-\epsilon)$
for unbounded $f$ and $p(\epsilon) \overset{\mathrm{def}}{=} 2\exp(-\frac{2 s \epsilon^2}{{(b-a)}^2})$ for $f \in [a,b]$, is a standard upper bound on 
$\mathbb{P}[|\widehat{F}^{\mathrm{iid}}_{f,\mathcal{D}}(\mathbf{z})-F_{f,\mathcal{D}}(\mathbf{z})| \geq \epsilon]$. Furthermore:
$
\mathrm{MSE}(\widehat{F}^{\mathrm{ort}}_{f,\mathcal{D}}(\mathbf{z})) \leq \mathrm{MSE}(\widehat{F}^{\mathrm{iid}}_{f,\mathcal{D}}(\mathbf{z}))    
$.
\end{restatable}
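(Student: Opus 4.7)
The plan is to recognize that Theorem \ref{first_thm} is essentially a \emph{transfer principle}: every standard exponential-moment argument used to prove concentration for the iid Monte Carlo estimator $\widehat{F}^{\mathrm{iid}}_{f,\mathcal{D}}(\mathbf{z})$ can be replayed with the orthogonal estimator in its place, because Corollary \ref{main_cor} guarantees $\mathbb{E}[e^{\lambda \widehat{F}^{\mathrm{ort}}_{f,\mathcal{D}}(\mathbf{z})}] \leq \mathbb{E}[e^{\lambda \widehat{F}^{\mathrm{iid}}_{f,\mathcal{D}}(\mathbf{z})}]$ for all $\lambda \in \mathbb{R}$. Unbiasedness is immediate: each $\omega^{\mathrm{ort}}_i$ is marginally distributed as $\mathcal{D}$ by construction of OMC, so linearity of expectation gives $\mathbb{E}[\widehat{F}^{\mathrm{ort}}_{f,\mathcal{D}}(\mathbf{z})] = F_{f,\mathcal{D}}(\mathbf{z})$.

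For the unbounded tail bound, I would split the deviation event into upper and lower tails. For the upper tail with $a = F_{f,\mathcal{D}}(\mathbf{z}) + \epsilon > \mathbb{E}[X]$, I apply Markov's inequality with $\lambda = s\theta$ for some $\theta > 0$, yielding $\mathbb{P}[\widehat{F}^{\mathrm{ort}}_{f,\mathcal{D}}(\mathbf{z}) \geq a] \leq e^{-s\theta a}\mathbb{E}[e^{s\theta \widehat{F}^{\mathrm{ort}}_{f,\mathcal{D}}(\mathbf{z})}]$; Corollary \ref{main_cor} replaces the MGF by its iid counterpart, which factorizes as $M_X(\theta)^s$ because the $\omega^{\mathrm{iid}}_i$ are independent and $X_i$ are identically distributed copies of $X = f(\omega^\top \mathbf{z})$. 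Taking the supremum over $\theta > 0$ of $s(\theta a - \log M_X(\theta))$ recovers $s\mathcal{L}_X(F_{f,\mathcal{D}}(\mathbf{z})+\epsilon)$ by definition of the Legendre symbol. The lower tail is handled symmetrically with $\theta < 0$, and a union bound delivers $p(\epsilon)$. For the bounded case $f \in [a,b]$, I exploit the fact that the iid MGF satisfies the Hoeffding sub-Gaussian bound $\mathbb{E}[e^{\lambda(\widehat{F}^{\mathrm{iid}}_{f,\mathcal{D}} - F)}] \leq \exp(\lambda^2(b-a)^2/(8s))$; multiplying the MGF domination by $e^{-\lambda F}$ transfers this bound to $\widehat{F}^{\mathrm{ort}}_{f,\mathcal{D}}(\mathbf{z})$, after which the usual Markov-and-optimize step produces the Hoeffding exponent $2\exp(-2s\epsilon^2/(b-a)^2)$.

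For the MSE comparison, since marginals match, $\mathrm{Var}(f(\omega^{\mathrm{ort}}_i{}^\top \mathbf{z})) = \mathrm{Var}(f(\omega^{\mathrm{iid}}_i{}^\top \mathbf{z}))$, so it suffices to show that pairwise covariances of the OMC summands are non-positive. The plan is to extract this from Corollary \ref{main_cor} by Taylor-expanding both sides of $\mathbb{E}[e^{\lambda(X^{\mathrm{ort}}_i + X^{\mathrm{ort}}_j)}] \leq \mathbb{E}[e^{\lambda X^{\mathrm{ort}}_i}]\mathbb{E}[e^{\lambda X^{\mathrm{ort}}_j}]$ around $\lambda = 0$: the $O(1)$ and $O(\lambda)$ coefficients cancel, and matching the $O(\lambda^2)$ terms yields $\mathbb{E}[X^{\mathrm{ort}}_i X^{\mathrm{ort}}_j] \leq \mathbb{E}[X^{\mathrm{ort}}_i]\mathbb{E}[X^{\mathrm{ort}}_j]$, which is precisely $\mathrm{Cov}(X^{\mathrm{ort}}_i, X^{\mathrm{ort}}_j) \leq 0$. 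Expanding $\mathrm{Var}(\widehat{F}^{\mathrm{ort}}_{f,\mathcal{D}}(\mathbf{z}))$ as $\tfrac{1}{s^2}\sum_i \mathrm{Var}(X^{\mathrm{ort}}_i) + \tfrac{2}{s^2}\sum_{i<j}\mathrm{Cov}(X^{\mathrm{ort}}_i,X^{\mathrm{ort}}_j)$ then gives $\mathrm{MSE}(\widehat{F}^{\mathrm{ort}}) \leq \mathrm{MSE}(\widehat{F}^{\mathrm{iid}})$, since the iid covariances vanish.

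The main obstacle is conceptual rather than technical: Corollary \ref{main_cor} only delivers a one-directional MGF comparison, so one must carefully align the sign of $\theta$ with which side of the tail is being bounded, and must verify that the Legendre-transform step goes through regardless of whether $X$ is heavy-tailed or bounded. A secondary subtlety is that applying Lemma \ref{Lem_NA_ExpectationProductInequality} to just two coordinates (for the covariance argument) requires observing that the same negative dependence, established for the full orthogonal block in Lemma \ref{nd-lemma}, restricts to any sub-collection, which is a standard consequence of Definition \ref{Def_ND}.
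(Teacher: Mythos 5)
Your proposal is correct and, for the tail bounds, follows essentially the same route as the paper: Markov's inequality applied to $e^{\lambda \widehat{F}^{\mathrm{ort}}}$, domination of the OMC moment generating function by the iid one via Corollary \ref{main_cor}, factorization of the iid MGF into $M_X(\theta)^s$, and optimization over $\theta$ to recover the Legendre symbol (resp.\ the Hoeffding exponent in the bounded case); unbiasedness from matching marginals is also how the paper treats it. The one place you genuinely diverge is the MSE claim. The paper obtains $\mathbb{E}[f(\omega_i^{\top}\mathbf{z})f(\omega_j^{\top}\mathbf{z})] \leq \mathbb{E}[f(\omega_i^{\top}\mathbf{z})]\,\mathbb{E}[f(\omega_j^{\top}\mathbf{z})]$ directly from the second bullet of its auxiliary Lemma \ref{Lem_ND_2points} (negative dependence implies the product-expectation inequality), using exactly the observation you flag that sub-collections of ND variables are ND. You instead Taylor-expand both sides of the two-coordinate MGF inequality around $\lambda=0$ and match the $O(\lambda^2)$ coefficients; since the inequality holds for both signs of $\lambda$ and the lower-order terms cancel, this does yield $\mathrm{Cov}(X_i,X_j)\leq 0$. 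The trade-off is that your route requires the MGF of $f(\omega^{\top}\mathbf{z})$ to be finite and twice differentiable in a neighborhood of the origin, which is an extra (if mild, and implicitly already assumed wherever the Legendre symbol is used) integrability hypothesis in the unbounded case, whereas the paper's direct appeal to the ND product-expectation inequality needs only existence of the relevant first and second moments. Either way the conclusion $\mathrm{MSE}(\widehat{F}^{\mathrm{ort}}) \leq \mathrm{MSE}(\widehat{F}^{\mathrm{iid}})$ follows from the variance decomposition you write down.
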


For functions from $\textbf{F2}$-class, we simply decompose $f$ into its monotone increasing ($f^{+}$) and decreasing ($f^{-}$) part, apply introduced tools independently to $f^{+}$ and $f^{-}$ and use union bound. Finally, if $f$ is taken from the $\textbf{F3}$-class, 
we first decompose it into $\mathrm{even}[f]$ and $\mathrm{odd}[f]$ components, by leaving only even/odd terms in the Taylor series expansion. We then observe that for isotropic distributions we have:  $F_{\mathrm{odd}[f],\mathcal{D}} = 0$ (see: Appendix Sec. \ref{sec:theorem1proof}),
and thus reduce the analysis to that of $\mathrm{even}[f]$ which is from the $\textbf{F2}$-class. We conclude that:

\begin{restatable}[Exponential bounds for OMCs and $\textbf{F2}/\textbf{F3}$ classes] {theorem} {SecondTheorem} \label{Thm_StrongConcentration_main}
Let $\mathbf{z} \in \mathbb{R}^{d}$ and assume that function $f:\mathbb{R} \rightarrow \mathbb{R}$ is from the class \textbf{F2} or \textbf{F3}. Then for $\epsilon>0$:
\vspace{-1.2mm}
\begin{equation}
\mathbb{P}[|\widehat{F}^{\mathrm{ort}}_{f,\mathcal{D}}(\mathbf{z})-F_{f,\mathcal{D}}(\mathbf{z})| \geq \epsilon] \leq
p(\epsilon) \overset{\mathrm{def}}{=} u^{+} + u^{-},
\end{equation}
where $u^{+/-} \overset{\mathrm{def}}{=} \exp(-s\mathcal{L}_{X^{+/-}}(F_{f,\mathcal{D}}(\mathbf{z})+\frac{\epsilon}{2})+\exp(-s\mathcal{L}_{X^{+/-}}(F_{f,\mathcal{D}}(\mathbf{z})-\frac{\epsilon}{2})$, and $X^{+/-}$ is defined as: $X^{+/-} \overset{\mathrm{def}}{=} f^{+/-}$ if $f$ is from $\mathbf{F2}$ and as: $X^{+/-} \overset{\mathrm{def}}{=} (\mathrm{even}[f])^{+/-}$ if $f$ is from $\mathbf{F3}$. As before, in the bounded case we can simplify $u^{+}$ and $u^{-}$ to: $u^{+/-} \overset{\mathrm{def}}{=} 2\exp(-\frac{s \epsilon^2}{2 {(b^{+/-}-a^{+/-})}^2})$,
where $a^{+},b^{+},a^{-},b^{-}$ are such that:
$f^{+} \in [a^{+},b^{+}]$ and $f^{-} \in [a^{-},b^{-}]$ if $f$ is from $\textbf{F2}$
or $(\mathrm{even}[f])^{+} \in [a^{+},b^{+}]$ and $(\mathrm{even}[f])^{-} \in [a^{-},b^{-}]$ if $f$ is from $\textbf{F3}$. Furthermore, if $(\mathrm{even}[f])^{+}=0$ or $(\mathrm{even}[f])^{-}=0$, we can tighten that bound using upper bound from Theorem \ref{first_thm} and thus, establish better concentration bounds than for base MC.

\end{restatable}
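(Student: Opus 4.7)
The plan is to reduce Theorem \ref{Thm_StrongConcentration_main} to Theorem \ref{first_thm} by decomposing $f$ into monotone-in-$|u|$ pieces, controlling each deviation separately via Corollary \ref{main_cor} and the Legendre--Chernoff machinery already invoked for the $\textbf{F1}$ case, and then combining the contributions by a triangle inequality and a union bound.

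For the class $\textbf{F2}$, write $f=f^{+}+f^{-}$ with $f^{\pm}\in\textbf{F1}$. Linearity of expectation and of the empirical average give both $F_{f,\mathcal{D}}(\mathbf{z})=F_{f^{+},\mathcal{D}}(\mathbf{z})+F_{f^{-},\mathcal{D}}(\mathbf{z})$ and $\widehat{F}^{\mathrm{ort}}_{f,\mathcal{D}}(\mathbf{z})=\widehat{F}^{\mathrm{ort}}_{f^{+},\mathcal{D}}(\mathbf{z})+\widehat{F}^{\mathrm{ort}}_{f^{-},\mathcal{D}}(\mathbf{z})$ on the \emph{same} orthogonal ensemble. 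A triangle inequality and union bound then yield $\mathbb{P}[|\widehat{F}^{\mathrm{ort}}_{f}-F_{f,\mathcal{D}}|\geq\epsilon]\leq \mathbb{P}[|\widehat{F}^{\mathrm{ort}}_{f^{+}}-F_{f^{+},\mathcal{D}}|\geq\epsilon/2]+\mathbb{P}[|\widehat{F}^{\mathrm{ort}}_{f^{-}}-F_{f^{-},\mathcal{D}}|\geq\epsilon/2]$, after which Theorem \ref{first_thm} applies to each monotone piece. Splitting each sub-event into upper and lower tails and invoking the Chernoff bound with the Legendre symbol produces the two exponential summands inside each of $u^{+}$ and $u^{-}$; the bounded-range variant just replaces the Legendre bound with Hoeffding's inequality on the interval $[a^{\pm},b^{\pm}]$, explaining the $2\exp(-s\epsilon^{2}/(2(b^{\pm}-a^{\pm})^{2}))$ expression.

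For the class $\textbf{F3}$, I first split $f=\mathrm{even}[f]+\mathrm{odd}[f]$ by retaining only even- or only odd-degree terms in the (everywhere-convergent) Taylor series. Because $\mathcal{D}$ is isotropic, $\omega\stackrel{d}{=}-\omega$, so every odd moment vanishes and $F_{\mathrm{odd}[f],\mathcal{D}}(\mathbf{z})=0$, i.e.\ $F_{f,\mathcal{D}}(\mathbf{z})=F_{\mathrm{even}[f],\mathcal{D}}(\mathbf{z})$. To reduce the tail of $\widehat{F}^{\mathrm{ort}}_{f}$ to that of $\widehat{F}^{\mathrm{ort}}_{\mathrm{even}[f]}$, I would couple the orthogonal ensemble $\{\omega^{\mathrm{ort}}_{i}\}$ with its negation $\{-\omega^{\mathrm{ort}}_{i}\}$, which also forms an orthogonal ensemble with the same joint law (isotropy plus mutual orthogonality are preserved by $\omega\mapsto-\omega$); the half-sum of the two coupled estimators is exactly $\widehat{F}^{\mathrm{ort}}_{\mathrm{even}[f]}$, while each summand is distributed as $\widehat{F}^{\mathrm{ort}}_{f}$. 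A symmetrization argument together with the triangle inequality transfers tail control from the even-part estimator back to the full estimator. Since $\mathrm{even}[f]\in\textbf{F2}$ with monotone pieces $(\mathrm{even}[f])^{\pm}$, the $\textbf{F2}$ analysis already completed applies and delivers the claimed bound with $X^{\pm}=(\mathrm{even}[f])^{\pm}$. Finally, if one of $(\mathrm{even}[f])^{\pm}$ vanishes then $\mathrm{even}[f]$ itself lies in $\textbf{F1}$; the triangle-inequality split is unnecessary and Theorem \ref{first_thm} applies directly, yielding both the tighter exponential bound (no factor-of-two loss in $\epsilon$) and the inherited MSE comparison $\mathrm{MSE}(\widehat{F}^{\mathrm{ort}})\leq\mathrm{MSE}(\widehat{F}^{\mathrm{iid}})$.

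The main obstacle is the $\textbf{F3}$ reduction: zero mean of $\mathrm{odd}[f]$ alone does not control the tail of the corresponding empirical sum, since the $\omega^{\mathrm{ort}}_{i}$ are merely mutually orthogonal rather than jointly sign-symmetric. Making the negation coupling rigorous, and verifying that the combined fluctuation of $\mathrm{odd}[f]$ is genuinely dominated by (and does not inflate) the even-part tail produced by the $\textbf{F2}$ bound, is the delicate step; it is precisely here that isotropy of $\mathcal{D}$ must be used together with the orthogonality structure of the ensemble, rather than either in isolation.
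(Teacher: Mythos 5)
Your $\textbf{F2}$ argument is exactly the paper's: decompose $f=f^{+}+f^{-}$ into pieces from $\textbf{F1}$, apply the ND-based Chernoff--Hoeffding / Legendre machinery of Theorem \ref{first_thm} to each monotone piece on the same orthogonal ensemble, and combine by the triangle inequality and a union bound at level $\epsilon/2$; this half is correct and needs no further comment.

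The gap is in your $\textbf{F3}$ reduction, specifically in the negation coupling. Writing $A=\widehat{F}^{\mathrm{ort}}_{f,\mathcal{D}}(\mathbf{z})$ computed on $\{\omega_{i}^{\mathrm{ort}}\}$ and $B$ for the same quantity on $\{-\omega_{i}^{\mathrm{ort}}\}$, you correctly observe that $\tfrac{1}{2}(A+B)=\widehat{F}^{\mathrm{ort}}_{\mathrm{even}[f],\mathcal{D}}(\mathbf{z})$ and that $A$ and $B$ have the same law. But this transfers concentration in the wrong direction: from tail bounds on $A$ and $B$ one deduces (by union bound, or by convexity of $x\mapsto e^{\lambda x}$) a tail bound on their average, i.e.\ on the even-part estimator --- not conversely. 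Knowing that $\tfrac{1}{2}(A+B)$ concentrates says nothing about $A$ alone, because $A-B$, which is exactly twice the odd-part estimator $\widehat{F}^{\mathrm{ort}}_{\mathrm{odd}[f],\mathcal{D}}(\mathbf{z})$, could in principle fluctuate heavily while cancelling in the sum. So the step ``a symmetrization argument together with the triangle inequality transfers tail control from the even-part estimator back to the full estimator'' is precisely the step that fails, and you flag it yourself without resolving it. To be fair, the paper's own treatment of $\textbf{F3}$ is no more rigorous: its appendix proof only carries out the $\textbf{F2}$ decomposition, and the main text simply asserts that $F_{\mathrm{odd}[f],\mathcal{D}}=0$ reduces the analysis to $\mathrm{even}[f]$, which likewise leaves the fluctuation of $\widehat{F}^{\mathrm{ort}}_{\mathrm{odd}[f],\mathcal{D}}(\mathbf{z})$ uncontrolled. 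A genuine fix must use more than isotropy of the marginals --- for instance, the invariance of the Haar-orthogonal ensemble under independent sign flips of its rows, which makes the odd-part estimator a Rademacher-weighted sum conditionally on the magnitudes $|\omega_{i}^{\mathrm{ort}\top}\mathbf{z}|$ and hence exponentially concentrated --- but the resulting extra tail term is absent from the claimed bound $u^{+}+u^{-}$, so the theorem as stated cannot be recovered by your route (nor, strictly, by the paper's).
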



The proofs of all our theoretical results are given in the Appendix.
\vspace{-2.5mm}
\subsection{Applications}
\label{sec:applications}
\vspace{-2.5mm}
In this section we discuss in more detail applications of the presented results. We see that by taking $f(x)=x^{2}$, we can apply our results to the JLT setting from Sec. \ref{sec:tech_intro}. 
\vspace{-3.5mm}
\paragraph{General RBF kernels:} Even more interestingly, Theorem \ref{Thm_StrongConcentration_main} enables us to give first strong concentration results for all RBF kernels, avoiding very cumbersome technical requirements regarding tails of the corresponding spectral distributions (see: \cite{geom, psrnn}). In particular, we affirmatively answer an open question whether OMCs provide exponential concentration guarantees for the class of Mat\'{e}rn kernels for every value of the hyperparameter $\nu$ (\cite{geom}). Theorem \ref{Thm_StrongConcentration_main} can be also directly applied to obtain strong concentration results regarding kernel ridge regression with OMCs (see: Theorem 2 from \cite{psrnn}) for all RBF kernels as opposed to just \textit{smooth RBFs} as in \cite{psrnn}. Thus we bridge the gap between theory (previously valid mainly for Gaussian kernels) and practice (where improvements via OMCs were reported for RBF kernels across the board \cite{geom}).
\vspace{-3.5mm}
\paragraph{First Strong Results for Classes of PNG Kernels:} We also obtain first exponentially small upper bounds on errors for OMCs applied to PNG kernels, which were previously intractable and for which the best known results were coming from second moment methods \cite{unreas, kama}. To see this, note that PNGs defined by nonlinearity $h(x)=e^{cx}$ can be rewritten as functions from the class \textbf{F3} (with $\mathbf{z}=\mathbf{x}+\mathbf{y}$),
namely: $K_{h}(\mathbf{x},\mathbf{y})=\mathbb{E}_{\omega \sim \mathcal{N}(0,\mathbf{I}_{d})}[e^{c\omega^{\top}(\mathbf{x}+\mathbf{y})}]$ (see: Table 1).
Furthermore, by applying Theorem \ref{Thm_StrongConcentration_main}, we actually show that these bounds are better than for the base MC estimator.
\vspace{-2.5mm}
\subsubsection{Uniform Convergence for OMCs and New Kernel Ridge Regression Results}
\label{sec:uniform_convergence}
\vspace{-2.0mm}
Undoubtedly, one of the most important applications of results from Sec. \ref{sec:theory} are first uniform convergence guarantees for OMCs which provide a gateway to strong downstream guarantees for them, as we will show on the example of kernel ridge regression. MSE-results for OMCs from previous works suffice to provide some pointwise convergence, but are too weak for the uniform convergence and thus previous downstream theoretical guarantees for OMCs were not practical. The following is true and implies in particular that OMCs uniformly convergence for all RBF kernels :
\begin{restatable}[Uniform convergence for OMCs] {theorem} {ThirdTheorem}
\label{uniformtheory}
Let $\mathcal{M} \subseteq \mathbb{R}^{d}$ be compact with diameter $\mathrm{diam}(\mathcal{M})$. Assume that $f$ has Lipschitz constant $L_{f}$. Then under assumptions as in Theorem \ref{first_thm} / \ref{Thm_StrongConcentration_main}, for any $r>0$:
\begin{equation}
\mathbb{P}[\sup_{\mathbf{z} \in \mathcal{M}} 
|\widehat{F}^{\mathrm{ort}}_{f,\mathcal{D}}(\mathbf{z})-F_{f,\mathcal{D}}(\mathbf{z})| \geq \epsilon] \leq
C(\frac{\mathrm{diam}(\mathcal{M})}{r})^{d} \cdot p(\epsilon / 2) + (\frac{2r \sigma L_{f}}{\epsilon})^{2},
\end{equation}
where $\sigma^{2} = \mathbb{E}_{\omega \sim D}[\omega^{T}\omega]$ (i.e. the second moment of $D$), $p$ is as in RHS of inequality from Theorem \ref{first_thm} / \ref{Thm_StrongConcentration_main}
and $C>0$ is a universal constant. In particular, if boundedness conditions from Theorem \ref{first_thm} / \ref{Thm_StrongConcentration_main} are satisfied, one can take: $s=\Theta(\frac{d}{\epsilon^{2}}\log(\frac{\sigma L_{f} \mathrm{diam}(\mathcal{M})}{\epsilon}))$ to get uniform $\epsilon$-error approximation. Moreover, even if boundedness conditions are not satisfied, one can still take: $s = \Theta(d\log(\frac{L_{f}\sigma(\mathrm{diam}(\mathcal{M}))}{\epsilon}))$to get uniform $\epsilon$-error approximation.
\end{restatable}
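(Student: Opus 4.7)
The plan is a standard $\epsilon$-net argument combined with the pointwise concentration results already established in Theorem \ref{first_thm} and Theorem \ref{Thm_StrongConcentration_main}. Because $\mathcal{M}$ is compact, volume-packing in $\mathbb{R}^d$ yields an $r$-net $\mathcal{N}_r\subseteq \mathcal{M}$ of cardinality at most $N:=C(\mathrm{diam}(\mathcal{M})/r)^d$ for a universal constant $C$. Setting $E(\mathbf{z}):=\widehat{F}^{\mathrm{ort}}_{f,\mathcal{D}}(\mathbf{z})-F_{f,\mathcal{D}}(\mathbf{z})$ and picking for each $\mathbf{z}\in\mathcal{M}$ a nearest net point $\mathbf{z}^*\in\mathcal{N}_r$, the decomposition
\[
\sup_{\mathbf{z}\in\mathcal{M}}|E(\mathbf{z})|\;\leq\;\max_{\mathbf{z}^*\in\mathcal{N}_r}|E(\mathbf{z}^*)|\;+\;\sup_{\mathbf{z}\in\mathcal{M}}|E(\mathbf{z})-E(\mathbf{z}^*)|
\]
sandwiches $\{\sup_{\mathbf{z}}|E(\mathbf{z})|\geq\epsilon\}$ inside the union of the two events in which each summand exceeds $\epsilon/2$.

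For the first event I apply Theorem \ref{first_thm} (in the $\textbf{F1}$ case) or Theorem \ref{Thm_StrongConcentration_main} (in the $\textbf{F2}/\textbf{F3}$ cases) at each of the $N$ net points and union-bound, producing the $C(\mathrm{diam}(\mathcal{M})/r)^d\cdot p(\epsilon/2)$ contribution. For the second event the Lipschitz hypothesis on $f$ together with Cauchy--Schwarz makes $\mathbf{z}\mapsto f(\omega^{\top}\mathbf{z})$ Lipschitz with constant $L_f\|\omega\|$; consequently the random map $\widehat{F}^{\mathrm{ort}}_{f,\mathcal{D}}(\cdot)$ has Lipschitz constant at most $\frac{L_f}{s}\sum_i\|\omega^{\mathrm{ort}}_i\|$, while the deterministic target $F_{f,\mathcal{D}}(\cdot)$ has Lipschitz constant at most $L_f\,\mathbb{E}_{\omega\sim\mathcal{D}}[\|\omega\|]\leq L_f\sigma$ by Jensen. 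Since each $\omega^{\mathrm{ort}}_i$ has marginal $\mathcal{D}$, convexity of $x\mapsto x^2$ gives $\mathbb{E}\bigl[\bigl(\tfrac{1}{s}\sum_i\|\omega^{\mathrm{ort}}_i\|\bigr)^2\bigr]\leq\sigma^2$, and a Minkowski-plus-Markov step on the squared Lipschitz constant of $E$ (bounding $|E(\mathbf{z})-E(\mathbf{z}^*)|$ by this constant times $r$) produces the $(2r\sigma L_f/\epsilon)^2$ tail bound.

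The concrete sample-complexity claims then follow by choosing $r$ of order $\epsilon/(\sigma L_f)$ so that the Markov term sits below any prescribed target, and solving $N\cdot p(\epsilon/2)\leq\delta$ for $s$: in the bounded regime, substituting $p(\epsilon/2)=2\exp(-\Theta(s\epsilon^2))$ yields $s=\Theta(\tfrac{d}{\epsilon^2}\log(\sigma L_f\,\mathrm{diam}(\mathcal{M})/\epsilon))$, while in the unbounded regime the Chernoff-type exponent from Theorem \ref{Thm_StrongConcentration_main} decays exponentially in $s$ with a rate that is $\Theta(1)$ at this resolution, giving $s=\Theta(d\log(\sigma L_f\,\mathrm{diam}(\mathcal{M})/\epsilon))$. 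The step I expect to be most delicate is the control of the random Lipschitz constant: the $\omega^{\mathrm{ort}}_i$ are \emph{not} independent, so one must rely purely on their matched marginals (through Jensen's inequality) rather than invoke any cross-moment cancellation, and any attempt to tighten the Markov constant via sub-Gaussian tails of $\sum_i\|\omega^{\mathrm{ort}}_i\|$ would force one back into the negative-dependence machinery of Sec.~\ref{sec:theory}.
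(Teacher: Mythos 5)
Your proposal follows essentially the same route as the paper's proof: an $r$-net of cardinality $C(\mathrm{diam}(\mathcal{M})/r)^{d}$, a union bound with the pointwise tails $p(\epsilon/2)$ at the net centers, Markov's inequality on the (random) Lipschitz constant of the error for the off-net deviation, and optimization over $r$ to extract the sample-complexity claims. The only substantive difference is the Lipschitz step: the paper differentiates and exploits unbiasedness of $\nabla\widehat{F}^{\mathrm{ort}}_{f,\mathcal{D}}$ to get the exact cancellation $\mathbb{E}[\|\nabla g\|^{2}]=\mathbb{E}[\|\nabla\widehat{F}^{\mathrm{ort}}_{f,\mathcal{D}}\|^{2}]-\|\nabla F_{f,\mathcal{D}}\|^{2}\leq\sigma^{2}L_{f}^{2}$, whereas your Minkowski bound on the sum of the two Lipschitz constants yields $4\sigma^{2}L_{f}^{2}$ and hence $(4r\sigma L_{f}/\epsilon)^{2}$ rather than $(2r\sigma L_{f}/\epsilon)^{2}$ --- a constant-factor loss absorbable into the universal constant $C$ by rescaling $r$, and in exchange your argument avoids assuming differentiability of $f$ (which the paper handles separately via finite differences).
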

\vspace{-2mm}
We can directly apply these results to kernel ridge regression with an \textbf{arbitrary} RBF via OMCs, by noting that in the RHS of Theorem 2 from \cite{psrnn} upper-bounding the error, we can drop $N^{2}$ multiplicative factor (if all points are in the compact set) (Appendix: Sec.\ref{ker}). This term was added as a consequence of simple union bound, no longer necessary if uniform convergence is satisfied.

\vspace{-3.0mm}
\section{Near-Orthogonal Monte Carlo Algorithm}
\vspace{-3mm}
Near-Orthogonal Monte Carlo (or: NOMC) is a new paradigm for constructing entangled MC samples for estimators involving isotropic distributions if the number of samples required satisfies $s>d$. 
We construct NOMC-samples to make angles between any two samples close to orthogonal (note that they cannot be exactly orthogonal for $s>d$ since this would imply their linear independence).
That makes their distribution much more uniform than in other methods (see: Fig. \ref{visualization}).
\small
\begin{figure}[h]
\vspace{-3mm}
    \begin{minipage}{1.0\textwidth}
    \includegraphics[width=.99\linewidth]{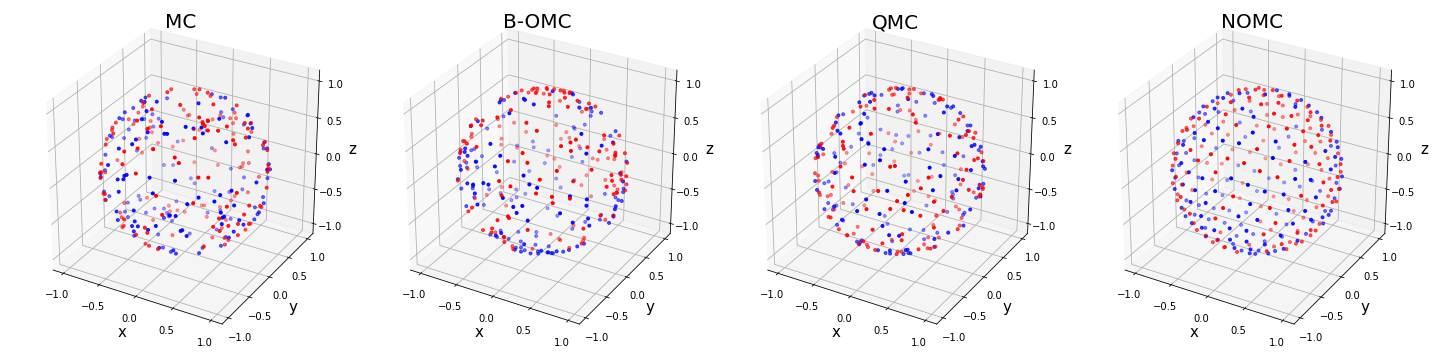}
    \end{minipage}
    \vspace{-4.5mm}
    \caption{\small{Visual comparison of the distribution of samples for four methods for $d=3$ and $s=150$. From left to right: base MC, B-OMC, QMC using Halton sequences and our NOMC. We see that NOMC produces most uniformly distributed samples.}}
\label{visualization}
\vspace{-3mm}
\end{figure}
\normalsize

This has crucial positive impact on the accuracy of the estimators applying NOMCs, making them superior to other methods, as we demonstrate in Sec. \ref{sec:experiments}, and still unbiased.

There are two main strategies that we apply to obtain near-orthogonality surpassing this in QMC or B-OMC. Our first proposition is to cast sample-construction as an optimization problem, where near-orthogonal entanglement is a result of optimizing objectives involving angles between samples. We call this approach: $\mathrm{opt}$-NOMC. Even though such an optimization incurs only one-time additional cost, we also present $\mathrm{alg}$-NOMC algorithm that has lower time complexity and is based on the theory of algebraic varieties over finite fields. Algorithm $\mathrm{alg}$-NOMC does not require optimization and in practice gives similar accuracy, thus in the experimental section we refer to both simply as NOMC. Below we discuss both strategies in more detail.
\vspace{-3mm}
\subsection{Algorithm $\mathrm{opt}$-NOMC}
\vspace{-2.5mm}
The idea of Algorithm $\mathrm{opt}$-NOMC is to force repelling property of the samples/particles (that for the one-block case was guaranteed by the ND-property) via specially designed energy function. 

That energy function achieves lower values for well-spread samples/particles and can be rewritten as the sum over energies $E(\omega_{i},\omega_{j})$ of local particle-particle interactions. There are many good candidates for $E(\omega_{i},\omega_{j})$. We chose: 
$E(\omega_{i},\omega_{j}) = \frac{\delta}{\delta + \|\omega_{i}-\omega_{j}\|^{2}_{2}}$,    
where $\delta>0$ is a tunable hyperparameter.
We minimize such an energy function on the sphere using standard gradient descent approach with projections.
WLOG we can assume that the isotropic distribution $\mathcal{D}$ under consideration is a uniform distribution on the sphere $\mathrm{Unif}(\mathcal{S}^{d-1})$, since for other isotropic distributions we will only need to conduct later cheap renormalization of samples' lengths. When the optimization is completed, we return randomly rotated ensemble, where random rotation is encoded by Gaussian orthogonal matrix obtained via standard Gram-Schmidt orthogonalization of the Gaussian unstructured matrix (see: \cite{ort}). Random rotations enable us to obtain correct marginal distributions (while keeping the entanglement of different samples obtained via optimization) and consequently - unbiased estimators when such ensembles are applied.
We initialize optimization with an ensemble corresponding to B-OMC as a good quality starting point.
For the pseudocode of $\mathrm{opt}$-NOMC, see $\mathrm{Algorithm}$ 1 box.

\textbf{Remark:} Note that even though in higher-dimensional settings, such an optimization is more expensive, this is a $\textbf{one time cost}$. If new random ensemble is needed, it suffices to apply new random rotation on the already optimized ensemble. Applying such a random rotation is much cheaper and can be further sped up by using proxies of random rotations (see: \cite{unreas}). For further discussion regarding the cost of the optimization (see: Appendix: Sec \ref{appendix:clocktime}).

\begin{algorithm}[H]\label{algorithm_NOMC}\
\textbf{Input}: Parameter $\delta, \eta, T$ \;
\textbf{Output}: randomly rotated ensemble $ \mathbf{\omega_{i}}^{(T)}$ for $i=1,2,...,N$ \;

Initialize $ \mathbf{\omega_{i}^{(0)}} (i=1,2,...,N)$ with multiple orthogonal blocks in B-OMC

 \For{$t=0,1,2,...,T-1 $}
 {
    Calculate Energy Function $E(\mathbf{\omega}_{i}^{(t)},\mathbf{\omega}_{j}^{(t)})=
    \frac{\delta}{\delta+\|\mathbf{\omega}_{i}^{(t)}-\mathbf{\omega}_{j}^{(t)}\|_{2}^{2}}$ for $i \neq j \in \{1,...,N\} $ \;
    
    \For{$i=1,2,...,N $}
    {
        Compute gradients $F_{i}^{(t)}=\frac{\partial \sum_{i\neq j} E(\mathbf{\omega_{i}}^{(t)},\mathbf{\omega_{j}}^{(t)})}
        {\partial \mathbf{\omega_{i}}^{(t)}}$ \;
    
        Update  $ \mathbf{\omega_{i}}^{(t+1)} \leftarrow  \mathbf{\omega_{i}}^{(t)} - \eta F_{i}^{(t)}$\;
        
        Normalize $ \mathbf{\omega_{i}}^{(t+1)}  \leftarrow  \frac{\mathbf{\omega_{i}}^{(t+1)}}{\|\mathbf{\omega_{i}}^{(t+1)}\|_{2}} $  \;        
    }
 }
 \caption{Near Orthogonal Monte Carlo: $\mathrm{opt}$-NOMC variant}
\end{algorithm}
\vspace{-3mm}
\subsection{Algorithm $\mathrm{alg}$-NOMC}
As above, without loss of generality we will assume here that $\mathcal{D} = \mathrm{Unif}(\mathcal{S}^{d-1})$ since, as we mentioned above, we can obtain samples for general isotropic $\mathcal{D}$ from the one for $\mathrm{Unif}(\mathcal{S}^{d-1})$ by simple length renormalization.
Note that in that setting we can quantify how well the samples from the ensemble $\Omega$ are spread by computing $\mathcal{A}(\Omega) \overset{\mathrm{def}}{=} \max_{i} |\omega_{i}^{\top}\omega_{j}|$.
It is a standard fact from probability theory that for base MC samples $\mathcal{A}(\Omega)=\Theta(r^{\frac{1}{2}}d^{-\frac{1}{2}}\sqrt{\log(d)})$ with high probability if the size of $\Omega$ satisfies: $|\Omega|=d^{r}$ and that is the case also for B-OMC. The question arises: can we do better ?

It turns out that the answer is provided by the theory of algebraic varieties over finite fields. Without loss of generality, we will assume that $d=2p$, where $p$ is prime.
We will encode samples from our structured ensembles via complex-valued functions $g_{c_{1},...,c_{r}}:\mathbb{F}_{p} \rightarrow \mathbb{C}$, given as
\begin{equation}
g_{c_{1},...,c_{r}}(x)=\frac{1}{\sqrt{p}}\exp(\frac{2 \pi i(c_{r}x^{r}+...+c_{1}x)}{p}),
\end{equation} where $\mathbb{F}_{p}$ and $\mathbb{C}$ stand for the field of residues modulo $p$ and a field of complex numbers respectively and $c_{1},...,c_{r} \in \mathbb{F}_{p}$.
The encoding $\mathbb{C}^{\mathbb{F}_{p}} \rightarrow \mathbb{R}^{d}$ is as follows: 
\begin{equation}
g_{c_{1},...,c_{r}}(x) \rightarrow \mathbf{v}(c_{1},...,c_{r})\overset{\mathrm{def}}{=}(a_{1},b_{1},...,a_{p},b_{p})^{\top} \in \mathbb{R}^{d},
\end{equation}
where:  $g_{c_{1},...,c_{r}}(j-1)=a_{j} + ib_{j}$. Using Weil conjecture for curves, one can show \cite{weil} that:
\begin{lemma}[NOMC via algebraic varieties]
If $\Omega=\{\mathbf{v}(c_{1},...,c_{r})\}_{c_{1},...,c_{r} \in \mathbb{F}(p)} \in S^{d-1}$, then $|\Omega|=p^{r}$, and furthermore
$\mathcal{A}(\Omega) \leq (r-1)p^{-\frac{1}{2}}$.
\end{lemma}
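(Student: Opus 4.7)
My plan is to establish the lemma in three parts: (a) each vector $\mathbf{v}(c_1,\ldots,c_r)$ has unit length, so $\Omega \subseteq S^{d-1}$; (b) distinct tuples in $\mathbb{F}_p^r$ give distinct vectors, so $|\Omega| = p^r$; and (c) for every $c \neq c'$ in $\mathbb{F}_p^r$, the inner product satisfies $|\mathbf{v}(c)^\top \mathbf{v}(c')| \leq (r-1)p^{-1/2}$. I will work throughout in the natural regime $r < p$, which the statement implicitly requires.

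Parts (a) and (b) I would dispatch quickly. For (a), since $|g_{c_1,\ldots,c_r}(x)| = 1/\sqrt{p}$ by construction, each pair $(a_j,b_j)$ satisfies $a_j^2 + b_j^2 = 1/p$, and summing over the $p$ indices gives $\|\mathbf{v}\|_2^2 = 1$. For (b), $\mathbf{v}(c) = \mathbf{v}(c')$ is equivalent to $g_c \equiv g_{c'}$ on $\mathbb{F}_p$, which forces the difference polynomial $P(x) \overset{\mathrm{def}}{=} \sum_{k=1}^{r}(c_k - c'_k)\, x^k$ to vanish at all $p$ elements of $\mathbb{F}_p$; since $\deg P \leq r < p$, this is only possible if $P$ is the zero polynomial in $\mathbb{F}_p[x]$, i.e.\ $c = c'$.

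The content of the lemma is part (c). I would first compute
\begin{equation*}
\mathbf{v}(c)^\top \mathbf{v}(c') = \sum_{j=1}^{p}(a_j a'_j + b_j b'_j) = \mathrm{Re}\!\left(\frac{1}{p}\sum_{j=0}^{p-1} \exp\!\left(\frac{2\pi i}{p}\, P(j)\right)\right),
\end{equation*}
using $g_c(j)\overline{g_{c'}(j)} = \tfrac{1}{p}\exp(\tfrac{2\pi i}{p}P(j))$ with the same difference polynomial $P$ as above. For $c \neq c'$, $P$ is a nonzero polynomial of some degree $k$ with $1 \leq k \leq r < p$, so $\gcd(k,p) = 1$ automatically. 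Invoking Weil's bound for exponential character sums over $\mathbb{F}_p$ then gives
\begin{equation*}
\left|\sum_{x \in \mathbb{F}_p} \exp\!\left(\frac{2\pi i}{p}\, P(x)\right)\right| \leq (k-1)\sqrt{p} \leq (r-1)\sqrt{p},
\end{equation*}
and combining with $|\mathrm{Re}(z)| \leq |z|$ and dividing by $p$ yields the claimed bound $|\mathbf{v}(c)^\top \mathbf{v}(c')| \leq (r-1)/\sqrt{p}$, which is exactly $\mathcal{A}(\Omega) \leq (r-1)p^{-1/2}$.

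The main obstacle is the clean invocation of Weil's theorem: I have to track that the exponential sum indeed corresponds to a well-defined nonzero polynomial over $\mathbb{F}_p$ whose effective degree $k$ lies in the range where the hypotheses of Weil hold. The subcase $k = 1$ causes no issue because the Weil estimate of $0$ matches the elementary identity $\sum_{x \in \mathbb{F}_p} \exp(2\pi i e_1 x / p) = 0$ for $e_1 \not\equiv 0 \pmod{p}$. Once Weil is applied correctly, the remaining arithmetic is routine.
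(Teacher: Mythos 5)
Your proof is correct and follows exactly the route the paper intends: the paper provides no written proof of this lemma at all, only the remark that it follows from the Weil conjecture for curves, and your argument --- reducing $\mathbf{v}(c)^\top\mathbf{v}(c')$ to the real part of $\tfrac{1}{p}\sum_{x\in\mathbb{F}_p}\exp(\tfrac{2\pi i}{p}P(x))$ for the nonzero difference polynomial $P$ of degree $k$ with $1\leq k\leq r<p$, then invoking the Weil bound $(k-1)\sqrt{p}$ --- is the standard completion of that citation. The unit-norm and cardinality parts, the implicit requirement $r<p$, and the degree-one edge case are all handled correctly.
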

\vspace{-2mm}
Thus we see that we managed to get rid of the $\sqrt{\log(d)}$ factor as compared to base MC samples and consequently, obtain better quality ensemble.
As for $\mathrm{opt}$-NOMC, before returning samples, we apply random rotation to the entire ensemble. But in contrary to $\mathrm{opt}$-NOMC, in this construction we avoid any optimization, and any more expensive (even one time) computations.
\vspace{-3mm}
\section{Experiments}
\label{sec:experiments}
\vspace{-2mm}
We empirically tested NOMCs in two settings: \textbf{(1)} kernel approximation via random feature maps and \textbf{(2)} estimating sliced Wasserstein distances, routinely used in generative modeling \cite{acharya}. For (\textbf{1}), we tested the effectiveness of NOMCs for RBF kernels, non-RBF shift-invariant kernels as well as several PNG kernels. For (\textbf{2}), we considered different classes of multivariate distributions. 
As we have explained in Sec. \ref{sec:tech_intro}, the sliced Wasserstein distance for two distributions $\eta, \mu$ is given as:
\begin{equation} \label{equation:swd}
\mathrm{SWD}(\eta, \mu) = (\mathbb{E}_{\mathbf{u} \sim \mathrm{Unif}(\mathcal{S}^{d-1})}[\mathrm{WD}^{p}_{p}(\eta_{\mathbf{u}},\mu_{\mathbf{u}})])^{\frac{1}{p}}.   
\end{equation}
In our experiment we took $p=2$.
We compared against three other methods: \textbf{(a)} base Monte Carlo (MC), \textbf{(b)} Quasi Monte Carlo applying Halton sequences (QMC)(\cite{qmc_1}) and 
block orthogonal MC (B-OMC). Additional experimental details are in the Appendix (Sec. \ref{appendix:swd}).
The results are presented in Fig. \ref{fig:kernels} and Fig. \ref{fig:distributions}. 
Empirical MSEs were computed by averaging over $k=450$ independent experiments.
Our NOMC method clearly outperforms other algorithms. For kernel approximation NOMC provides the best accuracy for $7$ out of $8$ different classes of kernels and for the remaining one it is the second best. For SWD approximation, NOMC provides the best accuracy \textbf{for all $8$ classes} of tested distributions. To the best of our knowledge, NOMC is the first method outperforming B-OMC.
\vspace{-3.5mm}
\small
\begin{figure}[h]
    \begin{minipage}{1.0\textwidth}
    \includegraphics[width=.99\linewidth]{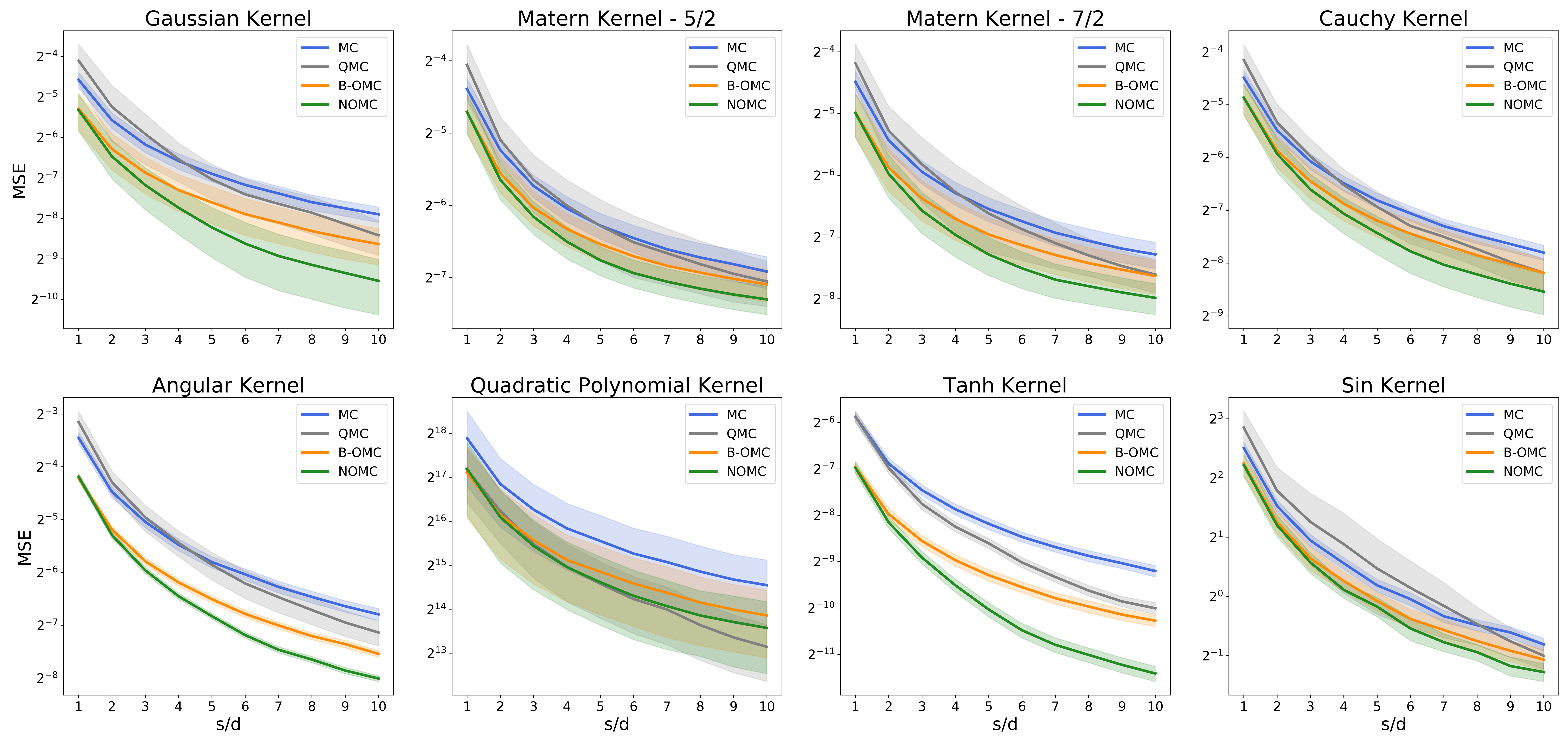}
    \end{minipage}
    \vspace{-4.5mm}
    \caption{\small{Comparison of MSEs of estimators using different sampling methods: MC, QMC, B-OMC and our NOMC. First four tested kernels are shift-invariant (first three are even RBFs) and last four are PNGs with name indicating nonlinear mapping $h$ used (see: Sec. \ref{sec:tech_intro}). On the x-axis: the number of blocks (i.e. the ratio of the number of samples $D$ used and data dimensionality $d$). Shaded region corresponds to $0.5 \times \mathrm{std}$}.}
\label{fig:kernels}
\end{figure}
\normalsize

\small
\begin{figure}[h]
    \begin{minipage}{1.0\textwidth}
    \includegraphics[width=.99\linewidth]{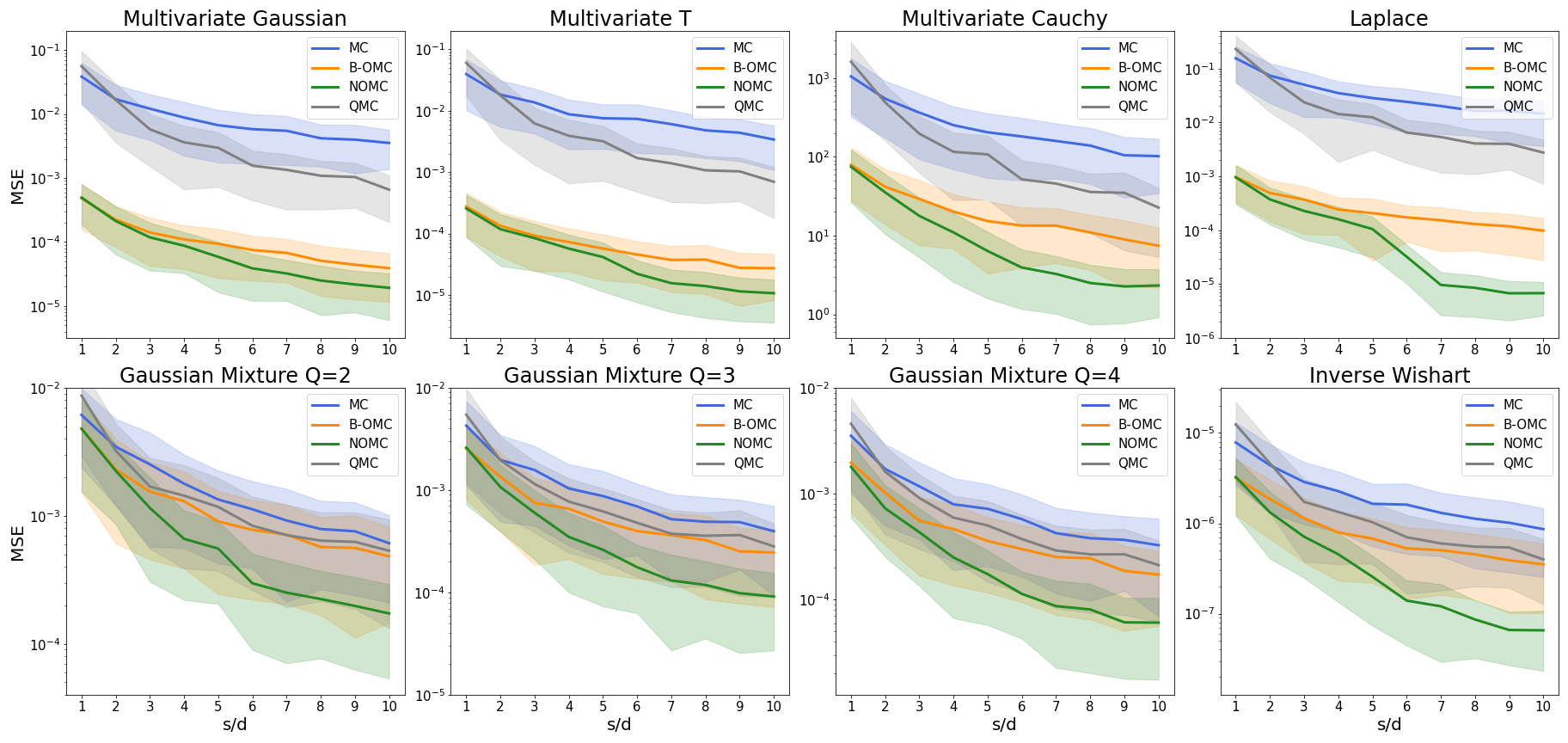}
    \end{minipage}
    \vspace{-4.5mm}
    \caption{\small{As in Fig. \ref{fig:kernels}}, but this time we compare estimators of sliced Wasserstein distances (SWDs) between two distributions taken from a class which name is given above the plot.}
\label{fig:distributions}
\end{figure}
\normalsize
\section{Broader Impact}
\label{sec:broader_impact}

In this paper we presented first general theory for the prominent class of orthogonal Monte Carlo (OMC) estimators (used on a regular basis for variance reduction), by discovering an intriguing connection with the theory of negatively dependent random variables. In particular, we give first results for general nonlinear mappings and for \textbf{all} RBF kernels as well as first uniform convergence guarantees for OMCs. Inspired by developed theory, we also propose new Monte Carlo algorithm based on near-orthogonal samples (NOMC) that outperforms previous SOTA in the notorious setting, where number of required samples exceeds data dimensionality.

We do believe that those findings have several important consequences for theoreticians as well as practitioners working on Monte Carlo methods for machine learning:

\textbf{General Nonlinear Models:} Understanding the impact of structured Monte Carlo methods leveraging entangled ensembles for general nonlinear models is of crucial importance in machine learning and should guide the research on the developments of new more sample-efficient and accurate MC methods. We think about our results as a first step towards this goal.

\textbf{Uniform Convergence Results:} Our uniform convergence results for OMCs from Section \ref{sec:uniform_convergence} are the first such guarantees for OMC methods that can be applied to obtain strong downstream guarantees for OMCs. We demonstrated it on the example of kernel ridge regression, but similar results can be derived for other downstream applications such as kernel-$\mathrm{SVM}$. They are important since in particular they provide detailed guidance on how to choose in practice the number of random features (see: the asymptotic formula for the number of samples in Theorem \ref{uniformtheory}). 

\textbf{Evolutionary Strategies with Structured MC:} We showed the value of our NOMC algorithm in Sec. \ref{sec:experiments} for kernel and SWD approximation, but the method can be applied as a general tool in several downstream applications, where MC sampling from isotropic distributions is required, in particular in evolutionary strategies (ES) for training reinforcement learning policies \cite{es_ort}. 
ES techniques became recently increasingly popular as providing state-of-the-art algorithms for tasks of critical importance in robotics such as end-to-end training of high-frequency controllers \cite{tennis} as well as training adaptable meta-policies \cite{meta-policy}. ES methods heavily rely on Monte Carlo estimators of gradients of Gaussians smoothings of certain classes of functions. This makes them potential beneficiaries of new developments in the theory of Monte Carlo sampling and consequently, new Monte Carlo algorithms such as NOMC.

\textbf{Algebraic Monte Carlo:} We also think that proposed by us NOMC algorithm in its algebraic variant is one of a very few effective ways of incorporating deep algebraic results into the practice of MC in machine learning. Several QMC methods rely on number theory constructions, but, as we presented, these are much less accurate and in practice not competitive with other structured methods. Not only does our $\mathrm{alg}$-NOMC provide strong theoretical foundations, but it gives additional substantial accuracy gains on the top of already well-optimized methods with no additional computational cost.
This motivates future work on incorporating modern algebraic techniques into Monte Carlo algorithms for machine learning.

\bibliographystyle{abbrv}
\bibliography{nomc}

\newpage
\section*{APPENDIX A: Demystifying Orthogonal Monte Carlo and Beyond - Proofs of Theoretical Results}

For the convenience of Reader, here we restate the theorems first and then present their proofs.

\subsection{Proof of Lemma \ref{nd-lemma}}

\begin{proof}
From the definition of negative dependence, what we need to prove is:
\begin{equation}
\mathbb{P}[ \bigcap_i^{d} (|{w_i^\mathrm{ort}}^{\top} \mathbf{z}| \leq \tilde{x}_i) ] \leq \prod_i^{d} \mathbb{P}[|{w_i^\mathrm{ort}}^{\top} \mathbf{z}| \leq \tilde{x}_i]    
\end{equation}
\begin{equation}
\mathbb{P}[ \bigcap_i^{d} (|{w_i^\mathrm{ort}}^{\top} \mathbf{z}| \geq \tilde{x}_i) ] \leq \prod_i^{d} \mathbb{P}[|{w_i^\mathrm{ort}}^{\top} \mathbf{z}| \geq \tilde{x}_i]
\end{equation}
where we use $ \tilde{x}_i$  to represent a different value than the original $x_i$, which should be $f^{-1}(x_i)$. We will illustrate how to prove the first inequality here since the other can be proved accordingly.

Firstly, we can decompose $w_i^\mathrm{ort}=v_i^\mathrm{ort} l_i$, where $v_i^\mathrm{ort}$ has unit length, and $l_i$ is taken independently from $v_i^\mathrm{ort}$, which represents the length scalar.
So we need to prove the following:
\begin{equation}
\mathbb{P}[ \bigcap_i^{d} (|{v_i^\mathrm{ort}}^{\top} \mathbf{z}| \leq \frac{\tilde{x}_i}{l_i} ) ] \leq \prod_i^{d} \mathbb{P}[|{v_i^\mathrm{ort}}^{\top} \mathbf{z}| \leq \frac{\tilde{x}_i}{l_i}]   
\end{equation}

But actually since negatively dependence should holds for any $x_i \in \mathbb{R}$, so it actually does not matters which scalar we use in the right hand side of each part of the probability inequality. So we will continue to use $x_i$ instead of $\frac{\tilde{x}_i}{l_i}$ in the following proof.

Furthermore, we assume $\| \mathbf{z}\|_2=1$ without loss of generality. Proof for the cases when $x_i\geq 1$ or $x_i\leq 0$ is trivial under such assumption, so we will only concentrate on the case when $0< x_j < 1$. Here, we can use a second trick for distribution transformation. We regard $v_1^\mathrm{ort}$, $v_2^\mathrm{ort}$, ..., $v_d^\mathrm{ort}$ as fixed, and $\mathbf{z}$ as a random rotation vector, so that we can replace $v_1^\mathrm{ort}$, $v_2^\mathrm{ort}$, ..., $v_d^\mathrm{ort}$ as $e_1, e_2, ..., e_d$ and $\mathbf{z}$ be a unit length vector uniformly distributed on the $\mathcal{S}^{d-1}$. After such transformation, the distribution of $|{v_i^\mathrm{ort}}^{\top} \mathbf{z}|$ will be equivalent to $ \frac{|e_i^{\top} g |}{{\| g \|}_2}=\frac{g_i}{{\| g \|}_2}$, where $g$ is a gaussian vector, and $g_i$ is its length of projection onto the $i^\mathrm{th}$ coordinate.

So the problem we need to prove is transformed to the following inequality:
\begin{equation}
\mathbb{P}[ \bigcap_i^{d} (\frac{g_i}{{\| g \|}_2}\leq x_i) ] \leq \prod_i^{d} \mathbb{P}[\frac{g_i}{{\| g \|}_2}\leq x_i] 
\end{equation}
From the rule of conditional probability, the LHS can be transformed to: 
\begin{equation}
\mathbb{P}[\frac{g_1}{{\| g \|}_2}\leq x_1] \mathbb{P}[\frac{g_2}{{\| g \|}_2}\leq x_2 | \frac{g_1}{{\| g \|}_2}\leq x_1]\mathbb{P}[\frac{g_3}{{\| g \|}_2}\leq x_3 | (\frac{g_1}{{\| g \|}_2}\leq x_1)\cap (\frac{g_2}{{\| g \|}_2}\leq x_2) ]...    
\end{equation}
until the conditional probability of $\frac{g_d}{{\| g \|}_2}$ on all $\frac{g_i}{{\| g \|}_2}$ for $i=1,2,...,d-1$ .

Therefore, we conclude that we only need to prove the following for each corresponding term $i$: 
\begin{equation}
\mathbb{P}[\frac{g_i}{{\| g \|}_2}\leq x_i \vert \bigcap_{j=1}^{i-1} (\frac{g_j}{{\Vert g \Vert}_2}\leq x_j) ] \leq P(\frac{g_i}{{\| g \|}_2}\leq x_i)
\end{equation}

We note that $\frac{g_j}{{\Vert g \Vert}_2}\leq x_j$ is equivalent to $\frac{{g_j}^2}{{\Vert g \Vert}_{2}^{2}} \leq {x_j}^2$. So for each $j<i$ we have: 
\begin{equation}
{g_j}^2 \leq {x_j}^2 {g_i}^2 + {x_j}^2 ({g_1}^2 + ... + {g_{i-1}}^2 + {g_{i+1}}^2 +...+{g_{d}}^2  )    
\end{equation}
which can be rewrite as:
\begin{equation}
{g_i}^2 \geq \frac{{g_j}^2}{{x_j}^2} - ({g_1}^2 + ... + {g_{i-1}}^2 + {g_{i+1}}^2 +...+{g_{d}}^2  )  
\end{equation}
We can also rewrite from $\frac{g_i}{{\Vert g \Vert}_2}\leq x_i$ and derive:
\begin{equation}
{g_i}^2 \leq \frac{{x_i}^2}{(1-{x_i}^2)} ({g_1}^2 + ... + {g_{i-1}}^2 + {g_{i+1}}^2 +...+{g_{d}}^2  )  
\end{equation}

Therefore, 
\begin{equation} \label{equation:last_in_ND_lemma}
\begin{split}
\mathbb{P}[\frac{g_i}{{\| g \|}_2}\leq x_i |\bigcap_{j=1}^{i-1} (\frac{g_j}{{\| g \|}_2}\leq x_j) ] = \mathbb{P}[{g_i}^2 \leq \frac{{x_i}^2}{(1-{x_i}^2)} ({g_1}^2 + ... + {g_{i-1}}^2 + {g_{i+1}}^2 +...+{g_{d}}^2  ) \\ 
| \bigcap_{j=1}^{i-1} ( {g_i}^2 \geq \frac{{g_j}^2}{{x_j}^2} - ({g_1}^2 + ... + {g_{i-1}}^2 + {g_{i+1}}^2 +...+{g_{d}}^2  ) )  ] \\
\leq  \mathbb{P}[{g_i}^2 \leq \frac{{x_i}^2}{(1-{x_i}^2)} ({g_1}^2 + ... + {g_{i-1}}^2 + {g_{i+1}}^2 +...+{g_{d}}^2  )] = \mathbb{P}[\frac{g_i}{{\| g \|}_2 }\leq x_i]    
\end{split}
\end{equation}
which finishes our proof of negative dependence. 

\end{proof}

We can conclude that:

\FirstLemma*

\subsection{Proof of Lemma 2}

To prove Lemma 2, we will use the following result \cite{Rita208}, \cite{Dev983}:

\begin{lemma}  \label{Lem_ND_2points} 
Let $X_1,…,X_n$ be negatively dependent random variables, then:
\begin{itemize}
  \item If $f_1,…,f_n$ is a sequence of measurable functions which are all monotone non-decreasing (or all are monotone non-increasing), then $f_1(X_1),…,f_n(X_n)$ are also negatively dependent random variables.
  \item $\mathbb{E}[X_1…X_n ] \leq \mathbb{E}[X_1]…\mathbb{E}[X_n]$, provided the expectation exist.
\end{itemize}
\end{lemma}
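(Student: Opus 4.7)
The plan is to tackle the two bullets of the lemma in turn, leaning only on the definition of negative dependence and a short measure-theoretic limit argument.

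For the closure claim, I would fix the all-non-decreasing case, since the all-non-increasing case then follows by applying the same argument after composing each coordinate with $x \mapsto -x$ (which merely swaps the roles of up-sets and down-sets and therefore the two halves of the ND-definition). The key observation is that each event $\{f_i(X_i) \geq y_i\}$ equals $\{X_i \in A_i\}$ for some up-set $A_i \subseteq \mathbb{R}$, which must be either $[a_i, \infty)$ or $(a_i, \infty)$ with $a_i = \inf\{x : f_i(x) \geq y_i\}$; similarly $\{f_i(X_i) \leq y_i\} = \{X_i \in B_i\}$ for a down-set $B_i$. In the non-strict cases $A_i = [a_i, \infty)$ and $B_i = (-\infty, b_i]$, the two ND-inequalities for $X_1, \ldots, X_n$ immediately yield the corresponding ND-inequalities for $f_1(X_1), \ldots, f_n(X_n)$. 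In the strict cases, I would write $\{X_i > a_i\} = \bigcup_k \{X_i \geq a_i + 1/k\}$, apply ND to the intersection $\bigcap_i \{X_i \geq a_i + 1/k\}$ for each $k$, and pass $k \to \infty$ using continuity of probability from below together with the fact that a finite product of monotone limits equals the limit of products.

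For the product-expectation inequality, I would work in the standard setting of non-negative $X_i$ (the regime in which the paper subsequently applies the lemma, e.g.\ to $e^{\lambda X_i}$). Invoking the layer-cake identity $X_i = \int_0^\infty \mathbf{1}\{X_i > t\}\, dt$ together with Fubini--Tonelli, the product expectation becomes
\[
\mathbb{E}[X_1 \cdots X_n] = \int_0^\infty \!\!\!\cdots\!\! \int_0^\infty \mathbb{P}(X_1 > t_1, \ldots, X_n > t_n)\, dt_1 \cdots dt_n.
\]
Applying the limit-upgraded form of ND inside the integrand gives $\mathbb{P}(X_1 > t_1, \ldots, X_n > t_n) \leq \prod_i \mathbb{P}(X_i > t_i)$ for a.e.\ $(t_1, \ldots, t_n)$, and splitting the iterated integral across coordinates delivers $\prod_i \mathbb{E}[X_i]$.

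The main obstacle is the persistent cosmetic gap between the $\geq$ appearing in the definition of ND and the $>$ that surfaces both in the preimage description of Part 1 and in the layer-cake representation of Part 2. I would dispatch this uniformly with a single approximation step, replacing strict events by non-strict ones along the sequence $a_i + 1/k$ and using countable additivity; after this, both parts of the statement follow purely mechanically from the definition.
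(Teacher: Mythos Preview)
The paper does not actually prove this lemma: it is stated with citations to \cite{Rita208} and \cite{Dev983} and invoked as a known result from the literature on negatively dependent random variables, with no argument given. Your proposal therefore goes beyond what the paper offers by supplying an actual proof sketch.

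Your argument for Part~1 is the standard one and is correct: preimages of up-sets under non-decreasing maps are up-sets, hence intervals of the form $[a_i,\infty)$ or $(a_i,\infty)$, and the strict case is handled by the monotone approximation $\{X_i > a_i\} = \bigcup_k \{X_i \geq a_i + 1/k\}$ together with continuity of measure. For Part~2, your layer-cake argument is clean and correct, but note that you have deliberately restricted to non-negative $X_i$, whereas the lemma as stated imposes no such hypothesis. You are right that this restriction is harmless for the paper's downstream use (the lemma is applied only to $e^{\lambda X_i} \geq 0$), and indeed the standard references typically state the product-expectation bound under a non-negativity assumption; but be aware that the full signed-variable version would require a different argument (for $n=2$ one uses Hoeffding's covariance identity, and for $n>2$ the statement is more delicate).
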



\SecondLemma*

\begin{proof} 
By Lemma 1 and the first point iof Lemma \ref{Lem_ND_2points}, we know that $X_{1},...,X_{n}$ are negatively dependent. Then, by applying the second point in Lemma \ref{Lem_ND_2points}, we know that:
\begin{equation}
\mathbb{E}[f_1(X_1 )…f_n(X_n)] \leq \mathbb{E}[f_1(X_1)]…\mathbb{E}[f_n(X_n)]
\end{equation}

If $ \lambda \geq 0$, we can take a non-decreasing function $ f_i(X_i) = e^{\lambda X_i}$ for each $i$, then: 
\begin{equation}
\mathbb{E}[\exp(\lambda \sum_{i=1}^m X_i)] \leq \prod_{i=1}^m \mathbb{E}[e^{\lambda X_i}] 
\end{equation}
Similarly, if $ \lambda \leq 0$, then we can take a non-increasing function $f_i(X_i) = e^{\lambda X_i}$, and this inequality will also be true. Actually, we say that $X_1,...X_n$ are acceptable if the inequality $\mathbb{E}[\exp(\lambda \sum_{i=1}^m X_i)] \leq \prod_{i=1}^m \mathbb{E}[e^{\lambda X_i}]$ holds for any real $\lambda$ \cite{Rita208}. 
\end{proof}

\subsection{Proof of Corollary \ref{main_cor}}

\FirstCorollary*

\begin{proof}

From Lemma \ref{nd-lemma} and Lemma \ref{Lem_NA_ExpectationProductInequality}, we can derive directly that if the function $f$ is monotone increasing (or decreasing) in $| \omega_i^{\top} \mathbf{z} |$, and we define $\widehat{F}^{\mathrm{ort}}_{f,\mathcal{D}}(\mathbf{z})$ and $\widehat{F}^{\mathrm{iid}}_{f,\mathcal{D}}(\mathbf{z})$ as the orthogonal and iid estimates for $\mathbb{E}_{\omega \sim \mathcal{D}} [f_{\mathcal{Z}}(\omega)]$, then the ND of $|{\omega_1^{ort}}^{\top} \mathbf{z}|,...,|{\omega_d^{ort}}^{\top} \mathbf{z}|$ implies $\forall \lambda \in \mathbf{R}$ and $s=d$:
\begin{equation} \label{equation:cor1Second}
\mathbb{E}[\exp(\lambda \widehat{F}^{\mathrm{ort}}_{f,\mathcal{D}}(\mathbf{z}))] \leq \prod_{i=1}^d \mathbb{E}[e^{\lambda\widehat{F}^{\mathrm{ort}}_{f,\mathcal{D}}(\mathbf{z})}] = \prod_{i=1}^d \mathbb{E}[e^{\lambda \widehat{F}^{\mathrm{iid}}_{f,\mathcal{D}}(\mathbf{z})}] 
\end{equation}
which is exactly the inequality in this Corollary.

For $s=kd$ where $k$ is a multiplier larger than 1, we can define $\widehat{F}^{\mathrm{ort}}_{f,\mathcal{D}}(\mathbf{z})$ as the estimator constructed by stacking $k$ independent orthogonal blocks together with dimension $d$, and $\widehat{F}^{\mathrm{iid}}_{f,\mathcal{D}}(\mathbf{z})$ as the base estimator with $s$ samples. The proof in such case is trivial since we can decompose $\mathbb{E}[\exp(\lambda \widehat{F}^{\mathrm{ort}}_{f,\mathcal{D}}(\mathbf{z}))]$ into the multiplication of $k$ expectations of independent blocks, and then use equation (\ref{equation:cor1Second}) again.

\end{proof}

\subsection{Proof of Theorem \ref{first_thm}}
\label{sec:theorem1proof}
\FirstTheorem*

\begin{proof}

Let's first work on the case when function $f$ is bounded. In such case, we can apply Chernoff-Hoeffdings inequality for iid estimators to $p(\epsilon)$, which is $2\exp(-\frac{2 s \epsilon^2}{{(b-a)}^2})$.

For $\lambda>0,\epsilon \in\mathbb{R}$, we apply Markov inequality here:
\begin{equation}
\begin{split}
\mathbb{P}[\widehat{F}^{\mathrm{ort}}_{f,\mathcal{D}}(\mathbf{z})-F_{f,\mathcal{D}}(\mathbf{z}) \geq \epsilon] = 
\mathbb{P}[e^{\lambda(\widehat{F}^{\mathrm{ort}}_{f,\mathcal{D}}(\mathbf{z})-F_{f,\mathcal{D}}(\mathbf{z}))} \geq e^{\lambda \epsilon}]  \\
\leq e^{-\lambda \epsilon} \mathbb{E}[e^{\lambda(\widehat{F}^{\mathrm{ort}}_{f,\mathcal{D}}(\mathbf{z})-F_{f,\mathcal{D}}(\mathbf{z}))}] =
e^{-\lambda \epsilon}e^{-\lambda F_{f,\mathcal{D}}(\mathbf{z})} \mathbb{E}[e^{\lambda \widehat{F}^{\mathrm{ort}}_{f,\mathcal{D}}(\mathbf{z})}] 
\end{split}
\end{equation}
Similarly, for iid estimator, we have:
\begin{equation}
\mathbb{P}[\widehat{F}^{\mathrm{iid}}_{f,\mathcal{D}}(\mathbf{z})-F_{f,\mathcal{D}}(\mathbf{z}) \geq \epsilon] = e^{-\lambda \epsilon}e^{-\lambda F_{f,\mathcal{D}}(\mathbf{z})} \mathbb{E}[e^{\lambda \widehat{F}^{\mathrm{iid}}_{f,\mathcal{D}}(\mathbf{z})}] 
\end{equation}
From Corollary \ref{main_cor}, we know directly that orthogonal estimator has better upper bound than iid estimator. 

For $\lambda<0$, 
\begin{equation}
\mathbb{P}[\widehat{F}^{\mathrm{ort}}_{f,\mathcal{D}}(\mathbf{z})-F_{f,\mathcal{D}}(\mathbf{z}) \leq -\epsilon] = 
\mathbb{P}[e^{\lambda(\widehat{F}^{\mathrm{ort}}_{f,\mathcal{D}}(\mathbf{z})-F_{f,\mathcal{D}}(\mathbf{z}))} \geq e^{\lambda \epsilon}]    
\end{equation}

$\mathbb{E}[e^{\lambda \widehat{F}^{\mathrm{ort}}_{f,\mathcal{D}}(\mathbf{z})}] \leq \mathbb{E}[e^{\lambda \widehat{F}^{\mathrm{iid}}_{f,\mathcal{D}}(\mathbf{z})}]$ in Corollary \ref{main_cor} also guarantees better lower bound than iid estimator. 

By combining these two cases, we know that $\mathbb{P}[|\widehat{F}^{\mathrm{ort}}_{f,\mathcal{D}}(\mathbf{z})-F_{f,\mathcal{D}}(\mathbf{z})| \geq \epsilon]$ has better bound than $\mathbb{P}[|\widehat{F}^{\mathrm{iid}}_{f,\mathcal{D}}(\mathbf{z})-F_{f,\mathcal{D}}(\mathbf{z})| \geq \epsilon]$.

Then for unbounded function $f$, we can apply Cra$\mathrm{\Acute{m}}$er-Chernoff bound to $p(\epsilon)$. We rewrite several steps here to show: $$p(\epsilon)=\exp\{-s(\mathcal{L}_{X}(F_{f,\mathcal{D}}(\mathbf{z}))+\epsilon)\}+\exp\{-s(\mathcal{L}_{X}(F_{f,\mathcal{D}}(\mathbf{z}))-\epsilon)\}$$
which is the bound for iid estimator.
\begin{align}
    &\quad\ \mathbb{P}[|\widehat{F}^{\mathrm{iid}}_{f,\mathcal{D}}(\mathbf{z})-F_{f,\mathcal{D}}(\mathbf{z})| \geq \epsilon] \notag\\
    &= \mathbb{P}[\widehat{F}^{\mathrm{iid}}_{f,\mathcal{D}}(\mathbf{z})-F_{f,\mathcal{D}}(\mathbf{z}) \geq \epsilon] +
\mathbb{P}[\widehat{F}^{\mathrm{iid}}_{f,\mathcal{D}}(\mathbf{z})-F_{f,\mathcal{D}}(\mathbf{z}) \leq -\epsilon] \notag\\
&= \mathbb{P}[\sum_{i=1}^{s} f_{\mathcal{Z}}(\omega_{i}^{\mathrm{iid}})- s F_{f,\mathcal{D}}(\mathbf{z}) \geq s \epsilon] +
\mathbb{P}[\sum_{i=1}^{s} f_{\mathcal{Z}}(\omega_{i}^{\mathrm{iid}})- s F_{f,\mathcal{D}}(\mathbf{z}) \leq -s \epsilon] \notag\\
&\leq \exp\{-\sup_{\theta>0}(\theta s\epsilon - \mathrm{log}^{\mathbb{E}[e^{\theta(\sum_{i=1}^s f_{\mathcal{Z}}(\omega_{i}^{\mathrm{iid}}) - s F_{f,\mathcal{D}}(\mathbf{z}))}]})\} 
\notag\\ &\quad\quad\quad
+ \exp\{-\sup_{\theta<0}(-\theta s\epsilon - \mathrm{log}^{\mathbb{E}[e^{\theta(\sum_{i=1}^s f_{\mathcal{Z}}(\omega_{i}^{\mathrm{iid}}) - s F_{f,\mathcal{D}}(\mathbf{z}))}]})\} \notag\\
&= \exp\{-s\sup_{\theta>0}(\theta \epsilon - \frac{1}{s}\mathrm{log}^{\mathbb{E}[e^{\theta(\sum_{i=1}^s f_{\mathcal{Z}}(\omega_{i}^{\mathrm{iid}}))}]}+\theta F_{f,\mathcal{D}}(\mathbf{z}))\} \notag\\
&\quad\quad\quad+ \exp\{-s\sup_{\theta<0}(-\theta \epsilon - \frac{1}{s}\mathrm{log}^{\mathbb{E}[e^{\theta(\sum_{i=1}^s f_{\mathcal{Z}}(\omega_{i}^{\mathrm{iid}}))}]}+\theta F_{f,\mathcal{D}}(\mathbf{z}))\} \notag\\
&= \exp\{-s\sup_{\theta>0}( \theta \epsilon - \mathrm{log}^{\mathbb{E}[e^{\theta( f_{\mathcal{Z}}(\omega_{i}^{\mathrm{iid}}))}]}+\theta F_{f,\mathcal{D}}(\mathbf{z}))\} \notag\\
&\quad\quad\quad+ \exp\{-s\sup_{\theta<0}(-\theta \epsilon - \mathrm{log}^{\mathbb{E}[e^{\theta( f_{\mathcal{Z}}(\omega_{i}^{\mathrm{iid}}))}]}+\theta F_{f,\mathcal{D}}(\mathbf{z}))\} \notag\\
&= \exp\{-s\sup_{\theta>0}( \theta (F_{f,\mathcal{D}}(\mathbf{z})+\epsilon) - \mathrm{log}^{\mathbb{E}[e^{\theta( f_{\mathcal{Z}}(\omega_{i}^{\mathrm{iid}}))}]})\} \notag\\ 
&\quad\quad\quad+ \exp\{-s\sup_{\theta<0}(-\theta (F_{f,\mathcal{D}}(\mathbf{z})-\epsilon) - \mathrm{log}^{\mathbb{E}[e^{\theta( f_{\mathcal{Z}}(\omega_{i}^{\mathrm{iid}}))}]})\} \notag\\
&= \exp\{-s(\mathcal{L}_{X}(F_{f,\mathcal{D}}(\mathbf{z}))+\epsilon)\}+\exp\{-s(\mathcal{L}_{X}(F_{f,\mathcal{D}}(\mathbf{z}))-\epsilon)\}
\end{align}

where $\mathcal{L}_{X}(a) = \sup_{\theta>0} \log(\frac{e^{\theta a}}{M_{X}(\theta)})$ if $a > \mathbb{E}[X]$ and $\mathcal{L}_{X}(a) = \sup_{\theta<0} \log(\frac{e^{\theta a}}{M_{X}(\theta)})$ if $a < \mathbb{E}[X]$.

The proof for the superiority of orthogonal estimator is similar as above, and we include it here for completeness. 

For $\lambda>0,\epsilon \in\mathbb{R}$, we have:

\begin{align}
\mathbb{P}[\widehat{F}^{\mathrm{ort}}_{f,\mathcal{D}}(\mathbf{z})-F_{f,\mathcal{D}}(\mathbf{z}) \geq \epsilon] &\leq 
\exp\{-\sup_{\theta>0}(\lambda \epsilon - \mathrm{log}^{\mathbb{E}[e^{\lambda(\widehat{F}^{\mathrm{ort}}_{f,\mathcal{D}}(\mathbf{z})-F_{f,\mathcal{D}}(\mathbf{z}))}]})\} \notag\\
&= \exp\{-\sup_{\theta>0}(\lambda (\epsilon+F_{f,\mathcal{D}}(\mathbf{z})) - \mathrm{log}^{\mathbb{E}[e^{\lambda(\widehat{F}^{\mathrm{ort}}_{f,\mathcal{D}}(\mathbf{z}))}]})\}
\end{align}

We can derive similarly such probability bound for iid estimator. Then by applying Corollary \ref{main_cor}, we know that $\mathrm{log}^{\mathbb{E}[e^{\lambda \widehat{F}^{\mathrm{ort}}_{f,\mathcal{D}}(\mathbf{z})}]} \leq \mathrm{log}^{\mathbb{E}[e^{\lambda \widehat{F}^{\mathrm{iid}}_{f,\mathcal{D}}(\mathbf{z})}]}$. With such relationship, we know directly that orthogonal estimator has better upper bound than iid estimator.

The same follows for $\lambda <0$. And we can combine these two cases and derive that  $\mathbb{P}[|\widehat{F}^{\mathrm{ort}}_{f,\mathcal{D}}(\mathbf{z})-F_{f,\mathcal{D}}(\mathbf{z})| \geq \epsilon]$ has better bound than $\mathbb{P}[|\widehat{F}^{\mathrm{iid}}_{f,\mathcal{D}}(\mathbf{z})-F_{f,\mathcal{D}}(\mathbf{z})| \geq \epsilon]$.

Finally, for the MSE of the iid estimator, we know from the independence of $(\omega_i)_{i=1}^{s}$ that:
\begin{equation}
\mathrm{MSE}(\widehat{F}^{\mathrm{iid}}_{f,\mathcal{D}}(\mathbf{z}))=
\frac{1}{s^2} \sum_{i=1}^s \mathrm{Var}[f(\omega_i^{\top}\mathbf{z})] = 
\frac{1}{s} \mathrm{Var}[f(\omega_1^{\top}\mathbf{z})]
\end{equation}
We can also decompose the MSE of orthogonal estimator as:
\begin{equation}
\mathrm{MSE}(\widehat{F}^{\mathrm{ort}}_{f,\mathcal{D}}(\mathbf{z}))=
\frac{1}{s} \mathrm{Var}[f(\omega_1^{\top}\mathbf{z})] + 
\frac{1}{s^2} \sum_{i\neq j}(\mathbb{E}[f(\omega_i^{\top}\mathbf{z})f(\omega_j^{\top}\mathbf{z})] - 
\mathbb{E}[f(\omega_i^{\top}\mathbf{z})] \mathbb{E}[f(\omega_j^{\top}\mathbf{z})] ) 
\end{equation}
Since a subset of two ND variables are also ND, the ND of $(f(\omega_i^{\top}\mathbf{z}))_{i=1}^s$ implies that the second part of $\mathrm{MSE}(\widehat{F}^{\mathrm{ort}}_{f,\mathcal{D}}(\mathbf{z}))$ is negative, which completes the proof.

\end{proof}

We further notice that since $\mathcal{D}$ is an isotropic probabilistic distribution on $\mathbb{R}^d$ which is rotation invariant, then for $\omega \sim \mathcal{D}$ and an odd function $\mathrm{odd}[f]$, we have $\mathbb{P}(\omega)=\mathbb{P}(-\omega)$ and $\mathrm{odd}[f](\omega^{\top}\mathbf{z})=-\mathrm{odd}[f](-\omega^{\top}\mathbf{z})$. Therefore, 
\begin{equation}
F_{\mathrm{odd}[f],\mathcal{D}}=\mathbb{E}_{\omega \sim \mathcal{D}}[\mathrm{odd}[f](\omega^{\top}\mathbf{z})]=\int_{\mathcal{D}}\mathrm{odd}[f](\omega^{\top}\mathbf{z})d\mathbb{P}(\omega)=0
\end{equation}

\subsection{Proof of Theorem \ref{Thm_StrongConcentration_main}}

\SecondTheorem*

\begin{proof}

Firstly, we decompose the estimator into increasing and decreasing parts as stated in \textbf{F2}: 
\begin{align}
\widehat{F}^{\mathrm{ort}}_{f,\mathcal{D}}(\mathbf{z}) &= \frac{1}{s} \sum_{i=1}^s f(|{\omega_i^\mathrm{ort}}^{\top} \mathbf{z}|) = \frac{1}{s}
\sum_{i=1}^s f^{+}(|{\omega_i^\mathrm{ort}}^{\top} \mathbf{z}|) + \frac{1}{s}
\sum_{i=1}^s f^{-}(|{\omega_i^{ort}}^{\top} \mathbf{z}|) \notag\\
&\overset{\mathrm{def}}{=} \widehat{F}^{\mathrm{ort},+}_{f,\mathcal{D}}(\mathbf{z}) + \widehat{F}^{\mathrm{ort},-}_{f,\mathcal{D}}(\mathbf{z})    
\end{align}
which are ND respectively. 

For bounded function $f$, we can apply Chernoff–Hoeffding inequalities for ND random variables \cite{Dub1998} and have:
\begin{equation}
\mathbb{P}[| \widehat{F}^{\mathrm{ort},+}_{f,\mathcal{D}}(\mathbf{z}) - F^{+}_{f,\mathcal{D}}(\mathbf{z})| \geq \epsilon]\leq 
2\exp(-\frac{2 s \epsilon^2}{ {(b^{+}-a^{+})}^2})    
\end{equation}
\begin{equation}
\mathbb{P}[|\widehat{F}^{\mathrm{ort},-}_{f,\mathcal{D}}(\mathbf{z}) - F^{-}_{f,\mathcal{D}}(\mathbf{z})| \geq \epsilon] \leq 
2\exp(-\frac{2 s \epsilon^2}{ {(b^{-}-a^{-})}^2})    
\end{equation}
 
Therefore, 
\begin{align}
&\quad\ \mathbb{P}[|\widehat{F}^{\mathrm{ort}}_{f,\mathcal{D}}(\mathbf{z})-F_{f,\mathcal{D}}(\mathbf{z})| \geq \epsilon] \notag\\
&= \mathbb{P}[|\widehat{F}^{\mathrm{ort},+}_{f,\mathcal{D}}(\mathbf{z}) + \widehat{F}^{\mathrm{ort},-}_{f,\mathcal{D}}(\mathbf{z}) - F^{+}_{f,\mathcal{D}}(\mathbf{z}) - F^{-}_{f,\mathcal{D}}(\mathbf{z})| \geq \epsilon]  \notag\\
&\leq \mathbb{P}[|\widehat{F}^{\mathrm{ort},+}_{f,\mathcal{D}}(\mathbf{z}) - F^{+}_{f,\mathcal{D}}(\mathbf{z})| + |\widehat{F}^{\mathrm{ort},-}_{f,\mathcal{D}}(\mathbf{z}) - F^{-}_{f,\mathcal{D}}(\mathbf{z})| \geq \epsilon]  \notag\\
&\leq \mathbb{P}[|\widehat{F}^{\mathrm{ort},+}_{f,\mathcal{D}}(\mathbf{z}) - F^{+}_{f,\mathcal{D}}(\mathbf{z})|\geq \frac{\epsilon}{2}] + \mathbb{P}[|\widehat{F}^{\mathrm{ort},-}_{f,\mathcal{D}}(\mathbf{z}) - F^{-}_{f,\mathcal{D}}(\mathbf{z})|\geq \frac{\epsilon}{2}]  \notag\\
&\leq 2\exp(-\frac{s \epsilon^2}{2 {(b^{+}-a^{+})}^2})+2\exp(-\frac{s \epsilon^2}{2{(b^{-}-a^{-})}^2})    
\end{align}

This procedure can be adapted to the case of unbounded function $f$ with similar steps in Theorem \ref{first_thm}, so we skip it.

\end{proof}

\subsection{Proof of Theorem \ref{uniformtheory}}
\label{conv}

\ThirdTheorem*

\begin{proof}
Motivated by \cite{rahimi}, the uniform convergence for OMCs can be proved in the following way. Define $g(\mathbf{z}) = \widehat{F}^{\mathrm{ort}}_{f,\mathcal{D}}(\mathbf{z}) - F_{f,\mathcal{D}}(\mathbf{z})$. Given the definition of $\widehat{F}^{\mathrm{ort}}_{f,\mathcal{D}}(\mathbf{z})$, it is unbiased, i.e. $\mathbb{E}[g(\mathbf{z})] = \mathbb{E}[\widehat{F}^{\mathrm{ort}}_{f,\mathcal{D}}(\mathbf{z}) - F_{f,\mathcal{D}}(\mathbf{z})] = 0$. 

Let $\mathcal{M} \subseteq \mathbb{R}^{d}$ be compact with diameter $\mathrm{diam}(\mathcal{M})$ and $\mathbf{z}\in \mathcal{M}$. We can find a $\epsilon$-net such that it can covers $\mathcal{M}$ with at most $P = (\frac{4\mathrm{diam}(\mathcal{M})}{r})^{d}$ balls of radius $r$. Denote $\mathbf\{{z_i\}}_{i=1}^{P}$ as the centers of the these balls. If $|g(\mathbf{z}_i)| < \frac{\epsilon}{2}$ and Lipschitz constant $L_{g}$ of $g$ satisfies: $L_g < \frac{\epsilon}{2r}, \forall i\in[P]$, then $|g(\mathbf{z})| < \epsilon$. By applying the union bound followed by Hoeffding's inequality applied to the anchors in the $\epsilon$-net, we can have the following:
\begin{equation}
    \mathbb{P}[\bigcup_{i=1}^{P}|g(\mathbf{z}_i)|\geq \frac{\epsilon}{2}] \leq P\cdot p(\frac{\epsilon}{2})
\end{equation}
If $f$ is differentiable, $L_g = \max_{\mathbf{z} \in\mathcal{M}}||\nabla g(\mathbf{z^{*}})||$. From the linearity of expectation, we can have $\mathbb{E}[\nabla \widehat{F}^{\mathrm{ort}}_{f,\mathcal{D}}(\mathbf{z})] =  \nabla F_{f,\mathcal{D}}(\mathbf{z})$, therefore we can have:
\begin{align}
    \mathbb{E}[L_{g}^{2}] &= \mathbb{E}[||\nabla \widehat{F}^{\mathrm{ort}}_{f,\mathcal{D}}(\mathbf{z^{*}}) - \nabla F_{f,\mathcal{D}}(\mathbf{z^{*}})||^{2}]\notag\\
    &=\mathbb{E}[||\nabla \widehat{F}^{\mathrm{ort}}_{f,\mathcal{D}}(\mathbf{z^{*}})||^{2} + ||\nabla F_{f,\mathcal{D}}(\mathbf{z^{*}})||^{2} - 2 {\nabla \widehat{F}^{\mathrm{ort}}_{f,\mathcal{D}}(\mathbf{z^{*}})}^{T}\nabla F_{f,\mathcal{D}}(\mathbf{z^{*}})]\notag\\
    &= \mathbb{E}[||\nabla \widehat{F}^{\mathrm{ort}}_{f,\mathcal{D}}(\mathbf{z^{*}})||^{2}] + \mathbb{E}[||\nabla F_{f,\mathcal{D}}(\mathbf{z^{*}})||^{2}] - 2\mathbb{E}[||\nabla F_{f,\mathcal{D}}(\mathbf{z^{*}})||^{2}]\notag\\
    &= \mathbb{E}[||\nabla \widehat{F}^{\mathrm{ort}}_{f,\mathcal{D}}(\mathbf{z^{*}})||^{2}] - \mathbb{E}[||\nabla F_{f,\mathcal{D}}(\mathbf{z^{*}})||^{2}]
\end{align}
Therefore, $\mathbb{E}[L_{g}^{2}] \leq \mathbb{E}[||\nabla \widehat{F}^{\mathrm{ort}}_{f,\mathcal{D}}(z^{*})||^{2}] \leq \mathbb{E}_{\mathcal{D}}[||\omega L_{f}||^{2}] = \sigma^{2}{L_{f}}^{2}$. Finally, if $f$ is not differentiable, we can obtain exactly the same bound via standard finite-difference analysis.
According to the Markov Inequality, we have the following:
\begin{equation}
    \mathbb{P}[L_g\geq \frac{\epsilon}{2r}] \leq (\frac{2r\sigma L_{f}}{\epsilon})^{2}.
\end{equation}
Thus, by union bound, we can conclude that:
\begin{equation}
    \mathbb{P}[\sup_{\mathbf{z}\in \mathcal{M}} |g(\mathbf{z})| \geq \epsilon]\leq(\frac{4\mathrm{diam}(\mathcal{M})}{r})^{d}\cdot p(\frac{\epsilon}{2}) + (\frac{2r\sigma L_{f}}{\epsilon})^{2}, 
\end{equation}
which is our results for general $f$. Now let us consider the case when $f$ is bounded.
For the case of \textbf{F1}, we can let $p(\epsilon) = 2\exp(-{\frac{2s\epsilon^{2}}{(b-a)^{2}}})$. Then:
\begin{equation}
     \mathbb{P}[\sup_{\mathbf{z}\in \mathcal{M}}|g(\mathbf{z})| \geq \epsilon]\leq2(\frac{4\mathrm{diam}(\mathcal{M})}{r})^{d}\exp(-{\frac{s\epsilon^{2}}{2(b-a)^{2}}}) + (\frac{2r\sigma L_{f}}{\epsilon})^{2} 
\end{equation}
For the case of \textbf{F2/F3}, we can have: $p(\epsilon) =
2(\exp(-\frac{s \epsilon^2}{2{(b^{+}-a^{+})}^2})+\exp(-\frac{s \epsilon^2}{2{(b^{-}-a^{-})}^2}))$. Then:
\begin{equation}
    \mathbb{P}[\sup_{\mathbf{z}\in \mathcal{M}}|g(\mathbf{z})| \geq \epsilon]\leq2(\frac{4\mathrm{diam}(\mathcal{M})}{r})^{d}[\exp(-\frac{s \epsilon^2}{8 {(b^{+}-a^{+})}^2})+\exp(-\frac{s \epsilon^2}{8 {(b^{-}-a^{-})}^2})] + (\frac{2r\sigma L_{f}}{\epsilon})^{2} 
\end{equation}
One can take $C=2\cdot 4^{d}$ here. In order to find smallest $s$ such that \textbf{F1/F2/F3} can satisfy this bound, we can optimize for $r$ and this is how we get the asymptotic value of the number of samples $s$ that provides $\epsilon$-accuracy (we assume here that bounds on $f$ are constants):
\begin{equation}
    s = {\Theta}(\frac{d}{\epsilon^{2}}\log(\frac{\sigma L_{f} \mathrm{diam}(\mathcal{M})}{\epsilon})).
\end{equation}
Another case is that $f$ is unbounded, For the case of $\textbf{F1}$, we can let $p(\epsilon) = \exp\{-s(\mathcal{L}_{X}(F_{f,\mathcal{D}}(\mathbf{z}))+\epsilon)\}+\exp\{-s(\mathcal{L}_{X}(F_{f,\mathcal{D}}(\mathbf{z}))-\epsilon)\}$. Then:
\begin{align}
    \mathbb{P}[\sup_{\mathbf{z}\in \mathcal{M}}|g(\mathbf{z})| &\geq \epsilon]\leq (\frac{4\mathrm{diam}(\mathcal{M})}{r})^{d}\cdot(\exp\{-s(\mathcal{L}_{X}(F_{f,\mathcal{D}}(\mathbf{z}))+\frac{\epsilon}{2})\}\notag\\
    &\quad\quad\quad+\exp\{-s(\mathcal{L}_{X}(F_{f,\mathcal{D}}(\mathbf{z}))-\frac{\epsilon}{2})\})+ (\frac{2r\sigma L_{f}}{\epsilon})^{2}
\end{align}
For the case of $\textbf{F2/F3}$, we can have: $p(\epsilon) =  \exp(-s\mathcal{L}_{X^{+}}(F_{f,\mathcal{D}}(\mathbf{z})+\frac{\epsilon}{2})+\exp(-s\mathcal{L}_{X^{+}}(F_{f,\mathcal{D}}(\mathbf{z})-\frac{\epsilon}{2}) + \exp(-s\mathcal{L}_{X^{-}}(F_{f,\mathcal{D}}(\mathbf{z})+\frac{\epsilon}{2})+\exp(-s\mathcal{L}_{X^{-}}(F_{f,\mathcal{D}}(\mathbf{z})-\frac{\epsilon}{2}))$. Then:
\begin{align}
    \mathbb{P}[\sup_{\mathbf{z}\in \mathcal{M}}|g(\mathbf{z})| \geq \epsilon]&\leq (\frac{4\mathrm{diam}(\mathcal{M})}{r})^{d}\cdot[\exp(-s\mathcal{L}_{X^{+}}(F_{f,\mathcal{D}}(\mathbf{z})+\frac{\epsilon}{4}))\notag\\
    &\quad\quad\quad+\exp(-s\mathcal{L}_{X^{+}}(F_{f,\mathcal{D}}(\mathbf{z})-\frac{\epsilon}{4}))\notag\\
    &\quad\quad\quad+ \exp(-s\mathcal{L}_{X^{-}}(F_{f,\mathcal{D}}(\mathbf{z})+\frac{\epsilon}{4}))\notag\\
    &\quad\quad\quad+\exp(-s\mathcal{L}_{X^{-}}(F_{f,\mathcal{D}}(\mathbf{z})-\frac{\epsilon}{4}))] + (\frac{2r\sigma L_{f}}{\epsilon})^{2}
\end{align}
Still, one can take $C = 2\cdot 4^{d}$ here. In order to find smallest $s$ such that \textbf{F1/F2/F3} can satisfy the bound, we optimize for r and get the asymptotic value of the number of sample $s$ that provides $\epsilon$-accuracy(we assume $\mathcal{L}_{X}(F_{f,\mathcal{D}}(\mathbf{z}))$ or $ \mathcal{L}_{X^{+/-}}(F_{f,\mathcal{D}}(\mathbf{z}))$ mentioned in Theorem 1 and Theorem 2 are constants:
\begin{equation}
    s = \Theta(d\log(\frac{L_{f}\sigma(\mathrm{diam}(\mathcal{M}))}{\epsilon}))
\end{equation}
\end{proof}

\subsection{On the Uniform Convergence of OMCs for Improving OMC Kernel Ridge Regression Guarantees}
\label{ker}
Recalling the setting in Theorem 2 of \cite{psrnn}:\\\\
\begin{theorem}
Assume that a dataset $\mathcal{X} = \{x_1, x_2,...,x_n\}$ is taken from a ball $\mathcal{B}$ of a fixed radius $r$ which is independent to the dimensionality of the data $n$, and size of dataset $N$, and the center $x_0$.\\\\
Consider kernel ridge regression adopting a smooth RBF kernel, especially Gaussian kernel. Let $\widehat{\Delta}_{iid}$ denote the smallest positive number such that $\mathbf{\widehat{K}}_{iid} + \lambda N\mathbf{I}_{N}$ is a $\Delta$-approximation of $\mathbf{K}+\lambda N \mathbf{I}_N$, where $\mathbf{\widehat{K}}_{iid}$ is an approximate kernel matrix obtained by using unstructured random features. Then for any $a > 0$, 
\begin{equation}
    \mathbb{P}[\widehat{\Delta}_{iid} > a]\leq p_{N,m}^{iid}(\frac{a\sigma_{min}}{N}),
\end{equation}
where $p_{N,m}^{iid} = N^{2}e^{-Cmx^{2}}$ for some universal constant $C>0$, $m$ is the number of random features used, $\sigma_{min}$ is the smallest singular value of $\mathbf{\widehat{K}}+\lambda N \mathbf{I}_N$ and $N$ is the dataset size. If instead orthogonal random features are used then for the corresponding spectral parameter $\widehat{\Delta}_{ort}$ the following holds:
\begin{equation}
    \mathbb{P}[\widehat{\Delta}_{ort}>a]\leq p_{N,m}^{ort}(\frac{a\sigma_{min}}{N}),
\end{equation}
where function $p_{N,m}^{ort}$ satisfies: $p_{N,m}^{ort}<p_{N,m}^{iid}$, for $n$ large enough.\\
\end{theorem}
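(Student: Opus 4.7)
The plan is to reduce the $\Delta$-approximation guarantee to a spectral-norm deviation bound for $\widehat{K}-K$ and then apply per-entry concentration inequalities that differ in strength between the iid and orthogonal regimes. First, $\widehat{K}+\lambda N \mathbf{I}_N$ is a $\Delta$-approximation of $K+\lambda N \mathbf{I}_N$ exactly when the spectral deviation $\|\widehat{K}-K\|_{op}$ is at most $\Delta \cdot \sigma_{\min}$, so it suffices to control this spectral quantity. Bounding
\begin{equation}
\|\widehat{K}-K\|_{op} \;\leq\; \|\widehat{K}-K\|_{F} \;\leq\; N \cdot \max_{i,j}\bigl|\widehat{K}(x_i,x_j)-K(x_i,x_j)\bigr|
\end{equation}
and union-bounding over the $N^2$ dataset pairs reduces the problem to an entrywise tail on $|\widehat{K}(x,y)-K(x,y)|$ evaluated at level $t=a\sigma_{\min}/N$; this is exactly where the outer $N^2$ factor in $p_{N,m}^{iid}$ comes from.

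For the iid part, each entry is the empirical mean of $m$ i.i.d.\ copies of $\cos(\omega^{\top}(x-y))\in[-1,1]$, so Hoeffding gives $\mathbb{P}[|\widehat{K}(x,y)-K(x,y)|>t]\leq 2e^{-Cmt^2}$ for a universal $C>0$, and substituting $t=a\sigma_{\min}/N$ recovers $p_{N,m}^{iid}(x)=N^2 e^{-Cmx^2}$. For the orthogonal part I would replace Hoeffding by the OMC Chernoff machinery of Section~\ref{sec:theory}: with $\mathbf{z}=x-y$ the integrand $\cos(\omega^{\top}\mathbf{z})$ is even and entire, hence lies in class $\mathbf{F3}$, and admits the explicit decomposition $\mathrm{even}[\cos]^{+}(u)=\sum_k u^{4k}/(4k)!$, $\mathrm{even}[\cos]^{-}(u)=-\sum_k u^{4k+2}/(4k+2)!$ displayed in Table~1. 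Each of these pieces is monotone in $|u|$, i.e.\ in class $\mathbf{F1}$. Applying Corollary~\ref{main_cor} to each piece dominates the moment generating function of the orthogonal sample mean by that of the corresponding iid sample mean; combining with the extended Markov argument from Theorem~\ref{first_thm} for $\mathrm{even}[\cos]^{+}$ and $\mathrm{even}[\cos]^{-}$ separately (in the style of Theorem~\ref{Thm_StrongConcentration_main}) gives an entrywise tail $2e^{-C' m t^2}$ with $C'\geq C$. Feeding this through the same Frobenius/union-bound reduction produces a $p_{N,m}^{ort}$ with the same $N^2$ prefactor but a larger exponent, yielding $p_{N,m}^{ort}<p_{N,m}^{iid}$.

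The step I expect to be the main obstacle is promoting the comparison to a \emph{strict} inequality $C'>C$ for $n=d$ large enough. Corollary~\ref{main_cor} by itself only delivers non-strict MGF domination, so obtaining a strictly better exponential rate requires either identifying a $\theta$-regime in which the ND inequality of Lemma~\ref{Lem_NA_ExpectationProductInequality} is strict, or bootstrapping from the strict MSE improvement in Theorem~\ref{first_thm} together with a Taylor expansion of the Legendre symbol $\mathcal{L}_{X^{\pm}}$ near $\mathbb{E}[X^{\pm}]$. Both routes exploit the genuinely negative correlation between $\mathrm{even}[\cos]^{\pm}(\omega_i^{\mathrm{ort}\,\top}\mathbf{z})$ and $\mathrm{even}[\cos]^{\pm}(\omega_j^{\mathrm{ort}\,\top}\mathbf{z})$ that isotropy of $\mathcal{D}$ forces on the orthogonal ensemble but that the iid sampler does not have. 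Finally, I would note the downstream strengthening promised in Section~\ref{sec:uniform_convergence}: replacing the Frobenius + union-bound step by the uniform convergence result of Theorem~\ref{uniformtheory} applied on the ball $\mathcal{B}$ removes the $N^2$ prefactor from $p_{N,m}$ entirely, since that bound depends on $d$ and $\mathrm{diam}(\mathcal{B})$ but not on $N$.
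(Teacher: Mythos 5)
Your reduction chain (spectral approximation reduced to $\|\widehat{\mathbf{K}}-\mathbf{K}\|_{2}\leq \Delta\sigma_{\min}$, then $\|\cdot\|_{2}\leq\|\cdot\|_{F}\leq N\max_{i,j}|\widehat{\mathbf{K}}_{i,j}-\mathbf{K}_{i,j}|$, then a union bound over the $N^{2}$ pairs and an entrywise tail at level $a\sigma_{\min}/N$) is exactly the skeleton the paper uses in Appendix Sec.~\ref{ker}, and the iid half via Hoeffding is correct. Be aware, though, that the paper does not reprove this theorem: it is quoted from \cite{psrnn}, and the proof the paper actually supplies is only for the \emph{refinement} that drops the $N^{2}$ prefactor via the uniform convergence of Theorem~\ref{uniformtheory} --- precisely the observation in your final sentence.

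The genuine gap is the second claim, $p_{N,m}^{ort}<p_{N,m}^{iid}$, and the route you propose does not close it. For the cosine integrand the $\mathbf{F3}$ decomposition gives $\mathrm{even}[\cos]^{+}(u)=\sum_{k}u^{4k}/(4k)!=(\cosh u+\cos u)/2$ and $\mathrm{even}[\cos]^{-}(u)=-(\cosh u-\cos u)/2$; these pieces are \emph{unbounded}, so the bounded tail $2e^{-C'mt^{2}}$ you want for each of them is unavailable, and for a Gaussian spectral measure the upper-tail moment generating function of $\cosh(\omega^{\top}\mathbf{z})$ is infinite for every $\theta>0$, so the Legendre-symbol bound degenerates on that side. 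Even ignoring this, the $\mathbf{F2}/\mathbf{F3}$ machinery of Theorem~\ref{Thm_StrongConcentration_main} splits $\epsilon$ into $\epsilon/2$ and sums two exponential terms, which \emph{worsens} the rate relative to the direct iid Hoeffding bound on $\cos\in[-1,1]$; the paper itself only claims superiority over base MC when one of the two monotone pieces vanishes, which is false for cosine. So your assertion of an entrywise tail with $C'\geq C$, let alone $C'>C$, is unsupported: the strict inequality in \cite{psrnn} comes from a dedicated cross-term analysis for smooth RBFs, not from the ND/MGF domination, and your two suggested repairs (a strict regime for Lemma~\ref{Lem_NA_ExpectationProductInequality}, or bootstrapping from the MSE comparison, which Theorem~\ref{first_thm} states only non-strictly) are plausible directions but are not carried out. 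As written, the $p^{ort}<p^{iid}$ half of the statement remains unproved.
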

Based on this original version, we would like to offer a refined version as the following:\\\\
Rather than having $p_{N,m}^{iid} = N^{2}e^{-Cmx^{2}}$, we can further remove $N^{2}$ by exploiting uniform convergence property if $z = x_i - x_j$ is in a compact set, $x_i, x_j$ are arbitrary two datapoints in the dataset, meaning that \begin{equation}
    p_{N,m}^{iid} = e^{-Cmx^{2}}
\end{equation}
Following the same logic, we can still have $p_{N,m}^{ort}<p_{N,m}^{iid}$, for $n$ large enough, resulting in a much stronger guarantee for kernel ridge regression. Proof is the following:

\begin{proof}

Motivated by \cite{psrnn}, we can even substantially improve theoretical guarantees offered in its Theorem 2 with the uniform convergence property that we derived above. In order to achieve it, we will improve the Lemma 1 of \cite{psrnn}. We discuss all steps in detail below.

For an RBF kernel $\mathbf{K}: \mathbb{R}^{n}\times\mathbb{R}^{n}$, with a corresponding random feature map: $\Phi_{m.n}: \mathbb{R}^{n}\rightarrow \mathbb{R}^{2m}$, we can approximate it with a randomized kernel estimator $\widehat{\mathbf{K}}$. Assume that for any $i,j \in [N]$, the following holds for any $c > 0:\mathbb{P}[|\Phi_{m,n}(x_i)^{T}\Phi_{m,n}(x_j) - \mathbf{K}(x_i, x_j)|>c]\leq g(c)$ for some fixed function $g:\mathbb{R}\rightarrow\mathbb{R}.$ Then with probability at least $1 - g(c),$ matrix $\widehat{\mathbf{K}}+\lambda \mathbf{I}_N$ is a $\Delta$-spectral approximation of matrix ${\mathbf{K}}+\lambda \mathbf{I}_N$ for $\Delta = \frac{Nc}{\sigma_{min}}$, where $\sigma_{min}$ stands for the minimal singular value of ${\mathbf{K}}+\lambda \mathbf{I}_N$.

Denote $\mathbf{K} + \lambda N\mathbf{I}_N = \mathbf{V}^T \mathbf{\Sigma}^{2}\mathbf{V}$, where an orthogonal matrix $\mathbf{V} \in \mathbb{R}^{N\times N}$ and a diagonal matrix $\Sigma\in\mathbf{R}^{N\times N}$ define the eigendecomposition of $\mathbf{K+\lambda N\mathbf{I}_N}$. As shown in the paper, in order to prove that $\widehat{\mathbf{K}} + \lambda N\mathbf{I}_N$ is a $\Delta$-spectral approximation of $\mathbf{K} + \lambda N\mathbf{I}_N$, it suffices to show that:
\begin{equation}
    ||\mathbf{\Sigma}^{-1} \mathbf{V} \widehat{\mathbf{K}}\mathbf{V}^{T}\mathbf{\Sigma}^{-1}- \mathbf{\Sigma}^{-1} \mathbf{V} \mathbf{K}\mathbf{V}^{T}\mathbf{\Sigma}^{-1}||_{2}\leq \Delta
\end{equation}
With the definition of $l_2$ norm and Frobenius norm, we can have:
\begin{align}
        &\quad \mathbb P[{||\mathbf{\Sigma}^{-1} \mathbf{V} \widehat{\mathbf{K}}\mathbf{V}^{T}\mathbf{\Sigma}^{-1}- \mathbf{\Sigma}^{-1} \mathbf{V} \mathbf{K}\mathbf{V}^{T}\mathbf{\Sigma}^{-1}||_{2} > \Delta}]\notag\\
        &\leq \mathbb P[{||\mathbf{\Sigma}^{-1} \mathbf{V} ||\widehat{\mathbf{K}}-\mathbf{K}||_{F}\mathbf{V}^{T}\mathbf{\Sigma}^{-1}||_{2} > \Delta}]\notag\\
    &= \mathbb P[||\mathbf{\widehat{K}} - \mathbf{K}||_{F}^{2} > \frac{\Delta^{2}}{||\mathbf{\Sigma}^{-1} \mathbf{V}||_{2}^{2}\cdot||\mathbf{V}^{T}\mathbf{\Sigma}^{-1}||_{2}^{2}}]\notag\\
    &\leq \mathbb P[||\mathbf{\widehat{K}} - \mathbf{K}||_{F}^{2} > \Delta^{2}\sigma_{min}^{2}].
\end{align}
The last inequality we use the fact that $||\mathbf{\Sigma}^{-1} \mathbf{V}||_2^{2}\leq\frac{1}{\sigma_{min}}$ and $||\mathbf{V}^{T}\mathbf{\Sigma}^{-1}||_2^{2}\leq\frac{1}{\sigma_{min}}$ because $\mathbf{V}$ is an isometric matrix.\\
Most importantly, we can refine the proof of lemma 1 in \cite{psrnn} using the uniform convergence property, provided that $z = x_i - x_j$ is in a compact set. Then the following inequalities hold:
\begin{align}
    &\quad\mathbb P[||\mathbf{\widehat{K}} - \mathbf{K}||_{F}^{2} > \frac{\Delta^{2}}{||\mathbf{\Sigma}^{-1} \mathbf{V}||_{2}^{2}\cdot||\mathbf{V}^{T}\mathbf{\Sigma}^{-1}||_{2}^{2}}]\notag\\
    &\leq \mathbb{P}[|\widehat{\mathbf{K}}_{i,j} - \mathbf{K}_{i,j}| > \frac{\Delta\sigma_{min}}{N}]\notag\\
    &=\mathbb{P}[|\Phi_{m,n}(x_i)^{T}\Phi_{m,n}(x_j) - \mathbf{K}_{i,j}| > \frac{\Delta\sigma_{min}}{N}]
\end{align}
Therefore, the probability that $\mathbf{\widehat{K}} + \lambda\mathbf{I}_N$ is a $\Delta$-spectral approximation of $\mathbf{K} + \lambda\mathbf{I}_N$ is at least $1-g(c)$ for $c = \frac{\Delta\sigma_{min}}{N}$. Afterwards, we can prove lemma 2,3,4,5 in \cite{psrnn} in the same way. Therefore, it shows that we can have a stronger concentration result for kernel ridge regression.
\end{proof}
\section*{APPENDIX B: Demystifying Orthogonal Monte Carlo and Beyond - Experiments}
\label{app:exp}

In our experiment with the particle algorithm ($\mathrm{opt}$-NOMC), we use: $\eta=1.0, \delta=0.1, T=50000$.  

\subsection{Clock Time Comparison for NOMC} \label{appendix:clocktime}

In order to present the efficiency of our NOMC optimization procedure, we run our algorithm on a \textbf{single 6-core computer with Intel Core i7 CPU}, and parameter $d$ range from 8 to 256. The algorithm here uses the plain gradient descent method for optimization. Please note that this wall clock time is just a \textbf{one-time cost}, even if a new ensemble of samples is required at each iteration of the higher-level algorithm. In such a case that one-time optimized ensemble is simply randomly rotated using independently chosen random rotations, as mentioned in main text.
Furthermore, we can always improve the efficiency by multi-machine parallelization, which however is not the focus of this work.

\small
\begin{figure}[h] 
    \begin{minipage}{1.0\textwidth}
    \includegraphics[width=.99\linewidth]{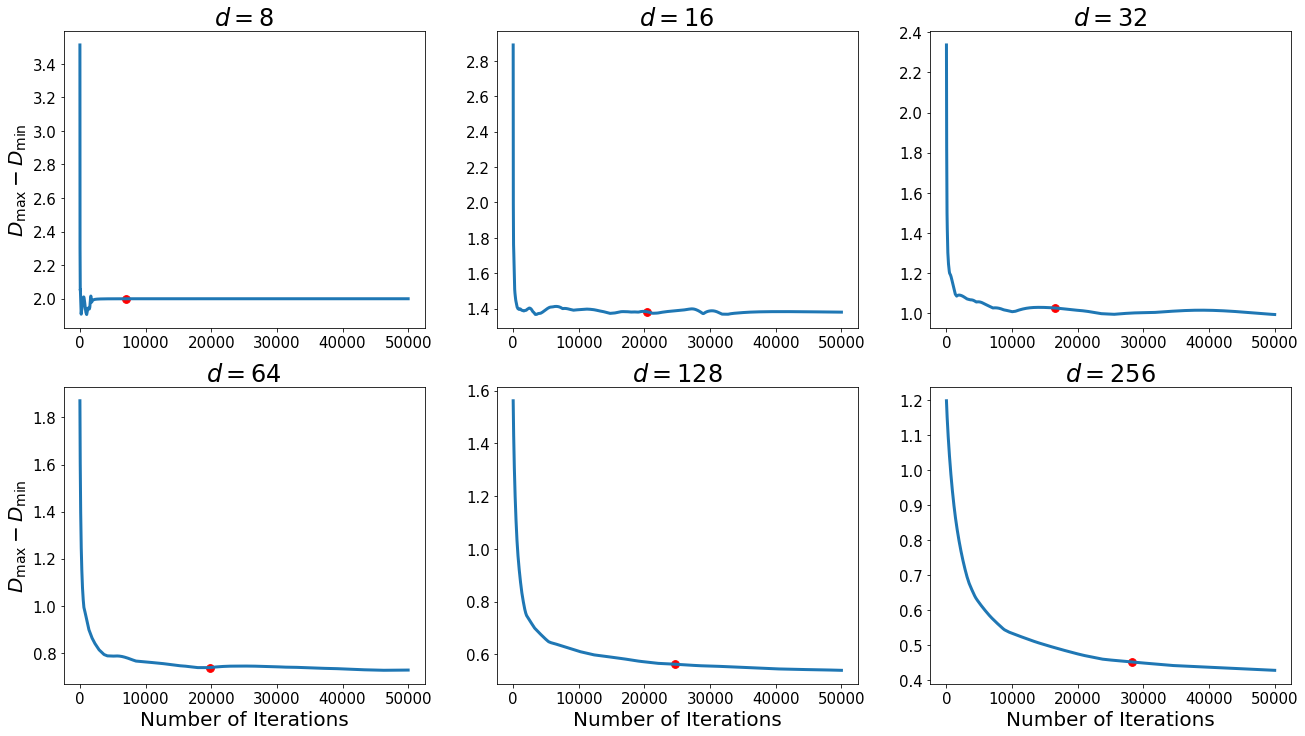}
    \end{minipage}
    \vspace{-4.5mm}
    \caption{Clock time comparisons with $d=8,16,32,64,128,256$, and $s=5d$. $D_{\mathrm{max}}$ is the maximum distance among all the points on the unit-sphere, and $D_{\mathrm{min}}$ represents the minimum of them. We use the difference between $D_{\mathrm{max}}$ and $D_{\mathrm{min}}$ as the y-axis, which should gradually decrease with each iterations. Besides, the red point in each line represents the first position where the absolute change in y-axis within the past 5000 iterations is below 0.01. We set parameters $\delta=0.1$ and $\eta=1$ in Algorithm \ref{algorithm_NOMC}. }
\label{fig:clocktime}
\end{figure}
\normalsize

\small
\vspace{-1mm}
\begin{center}
\label{table:clocktime}
 \begin{tabular}{||c | c | c | c | c | c | c ||} 
 \hline
 $d$ & 8 & 16 & 32 & 64 & 128 & 256  \\
 \hline
 Clock Time  & 20 seconds & 2 minutes & 5 minutes & 22 minutes & 2 hours & 14 hours\\
 \hline
\end{tabular}
\end{center}
\vspace{-1mm}
\small Table \ref{table:clocktime}: Clock time comparison for different $d$. The time here represents the first time when the absolute change in $D_{\mathrm{max}} - D_{\mathrm{min}}$ within the past 5000 iterations is below 0.01 (same as the red point in Figure \ref{fig:clocktime}).
\normalsize

\subsection{Experimental Details for Kernel Approximation Experiment}
In Section 5, we present a result showing that our NOMC method indeed outperforms other algorithms. Specifically, we adopt mean squared error (MSE) as the error measure for pointwise estimation. As for the data set, rather than using theoretically simulated data, we adopted a variety of the data set from the UCI Machine Learning Repository for our experiments. Due to space constraints, we only select one of the experimental results from those data set, which is Letter Recognition Data Set. Also, this is one of the most popular and classical experimental data set. In our experiment, we compared 8 different kernels, which is shown in the table 6.7 below. For each kernel, we tested the performance of MC, QMC, B-OMC, NOMC for 10 multipliers, ranging from 1 to 10. For each multiplier, we performed 450 pointwise estimations to 100 randomly sampled data pairs and calculated the average of the MSE, in order to relieve the impact of single selection bias. Empirically, for the purpose of ensuring the kernel values in an appropriate range, we scaled the dataset using the mean distance of the 50th $l_{2}$ nearest neighbor for 1000 sampled datapoints.(see:\cite{ort})\\

This experiment is implemented in {\fontfamily{qcr}\selectfont Python 3.7} and executed on a standard {\fontfamily{qcr}\selectfont 1.7 GHz Dual-Core Intel Core i7}.

\small
\vspace{-1mm}
\begin{center}
\label{table:kernel_table}
 \begin{tabular}{||c | c| c |  c ||} 
 \hline
 $\textrm{Kernel name}$ & $\textrm{Kernel function}$ & $\textrm{Function $\phi$}$ &  $\textrm{Fourier density}$ \\ [0.5ex] 
 \hline\hline
 $\textrm{Gaussian}$ &
 $\sigma^{2}\exp{(-\frac{1}{2\lambda^{2}})z^{2}}$ & $\cos(\omega^{T}\mathbf{x}+b)$ &  $\frac{\sigma^{2}}{(2\pi\lambda^{2})^{n/ 2}}\exp{-\frac{1}{2\lambda^{2}}||w||_{2}^{2}}$ \\ 
  \hline  
  $\textrm{Matérn}$\cite{geom} & $\sigma^{2}\frac{2^{1-\nu}}{\Gamma(\nu)}(\sqrt{2\nu}z)^{\nu}K_{\nu}(\sqrt{2\nu}z)$ & $\cos(\omega^{T}\mathbf{x}+
  b)$ & $\frac{\Gamma(\nu + n/2)}{\Gamma(\nu)(2\nu\pi)^{n/2}}(1 + \frac{||w||^2}{2\nu})^{-\nu - p/2}$ \\ 
 \hline  
  $\textrm{Cauchy}$\cite{rahimi} & $\Pi_{d}\frac{2}{1+z_{d}^{2}}$ & $\cos(\omega^{T}\mathbf{x}+b)$ &  $e^{-||\omega||_{1}}$ \\ 
 \hline  
 $\textrm{Angular}$ & $1 - \frac{2\theta_{\mathbf{x}, \mathbf{y}}}{\pi}$ & $\mathrm{sgn}(\omega^{T}x)$ & N/A \\ 
 \hline  
 $\textrm{Quadratic}$ & $\mathbb{E}_{\omega}[\phi(\mathbf{x})\phi(\mathbf{y})]$ & $(\omega^{T}x)^{2}$ & N/A \\ 
 \hline  
 $\textrm{Tanh}$ & $\mathbb{E}_{\omega}[\phi(\mathbf{x})\phi(\mathbf{y})]$ & $\mathrm{tanh}(\omega^{T}x)$ & N/A \\ 
 \hline  
 $\textrm{Sine}$ & $\mathbb{E}_{\omega}[\phi(\mathbf{x})\phi(\mathbf{y})]$ & $\sin(\omega^{T}x)$ & N/A \\ 
 \hline 
\end{tabular}
\end{center}
\vspace{-1mm}
\small Table \ref{table:kernel_table} : Tested kernels, their corresponding kernel functions (we give compact form if it exists), mappings $\phi$ such that $K(\mathbf{x},\mathbf{y})=\mathbb{E}_{\omega}[\phi(\mathbf{x})\phi(\mathbf{y})]$ (used in MC sampling), and Fourier denssities (valid only for hift-invariant kernels). For Matérn kernel, $\Gamma (\cdot)$ denotes the gamma function, $K_{\nu}(\cdot)$ denotes the modified Bessel function of the second kind, and $\nu$ is a non-negative parameter. Parameter $\lambda$ denotes standard deviation, $\mathbf{z} =(z_{1},...,z_{d})^{\top} = \mathbf{x}-\mathbf{y}$, $z=\|\mathbf{z}\|_{2}$ and $b \sim \mathrm{Unif}[0, 2\pi]$.
\normalsize

\subsection{Experimental Details for Sliced Wasserstein Distance Experiment} 
\label{appendix:swd}

We run these experiments on a single 6-core computer with Intel Core i7 CPU.
For the Sliced Wasserstein Distance experiments in Section 5, we use the same procedure as in the kernel approximation experiments and tested on 8 classes of distributions. For each class, we have two multivariate distributions with different means and covariance matrices. Following the formula in Equation (\ref{equation:swd}), we replaced the iid samples $\mathbf{u} \sim \mathrm{Unif}(\mathcal{S}^{d-1})$ (which is the plain MC method) with samples from multiple orthogonal blocks, near orthogonal algorithms, and Halton sequences (which are B-OMC, NOMC and QMC respectively). We independently sample 100 thousands data points from each of the two distributions from the same class, and then compute the projections on the directions of $\mathbf{u}$. The specific details regarding mean and covariance matrix of each distribution are in Table \ref{table:swd_table}. Let $\mathbf{A}$ be a $d \times d$ matrix with each entry generated from standard univariate gaussian distribution.
Also let $\mathbf{D}$ be a $d \times d$ matrix obtained from the distribution of $\mathbf{A}$ by zeroing all off-diagonal values to zero. We take $\mathbf{M} \overset{\mathrm{def}}{=} \sqrt{d} \mathbf{A}^{\top}\mathbf{A}$ (note that $\mathbf{A}$ is positive semi-definite).

\small
\vspace{-1mm}
\begin{center}
\label{table:swd_table}
 \begin{tabular}{||c | c | c | c ||} 
 \hline
 $\textrm{Distribution name}$ & $\textrm{Mean}$ & $\textrm{Covariance Matrix}$ & $\textrm{Parameter}$ \\ [0.5ex] 
 \hline\hline
 $\textrm{Multivariate Gaussian}$ & $(0,0,...,0), (1,1,...,1)$ & $\mathbf{M}_1, \mathbf{M}_2$ & $\textrm{N/A}$ \\ 
  \hline  
  $\textrm{Multivariate T}$ & $(0,0,...,0), (1,1,...,1)$  & $\mathbf{M}_1, \mathbf{M}_2$ & $\textrm{df=10}$ \\ 
 \hline  
  $\textrm{Multivariate Cauchy}$ & $(0,0,...,0), (1,1,...,1)$ & $\mathbf{M}_1, \mathbf{M}_2$ & $\textrm{N/A}$ \\ 
 \hline  
 $\textrm{Multivariate Laplace}$ & $(0,0,...,0), (0,0,...,0)$ & $\mathbf{M}_1, \mathbf{M}_2$ & $\textrm{N/A}$ \\ 
 \hline  
 $\textrm{Gaussian Mixture Q=2}$ & $(0,...,0,1,...,1), (1,...,1,0,...,0)$ & $\mathbf{D}_1, \mathbf{D}_2$ & $\textrm{N/A}$\\ 
 \hline  
 $\textrm{Gaussian Mixture Q=3}$ & $(1,1,1,1,0,...,0), (0,...,0,1,1,1)$ & $\mathbf{D}_1, \mathbf{D}_2, \mathbf{D}_3$ & $\textrm{N/A}$\\ & $(0,...,0,1,1,1,0,...,0)$ & & \\
 \hline  
 $\textrm{Gaussian Mixture Q=4}$ & $(1,1,1,1,0,...,0), (0,0,1,1,0,...,0)$ & $\mathbf{D}_1, \mathbf{D}_2, \mathbf{D}_3, \mathbf{D}_4$ & $\textrm{N/A}$\\ & $(0,...,0,1,1,0,...,0), (0,...,0,1,1)$ & & \\
 \hline 
 $\textrm{Inverse Wishart}$ & $(0,0,...,0), (1,1,...,1)$ & $\mathbf{M}_1, \mathbf{M}_2$ & $\nu=10$ \\ 
 \hline
\end{tabular}
\end{center}
\vspace{-1mm}
\small Table \ref{table:swd_table} : Tested classes of distributions (from each we sampled two distributions for SWD computations), their corresponding means of modes, covariance matrices for different modes and other parameters (if applicable). 
\normalsize

\medskip

\small
\appendix

\end{document}